\pgfplotsset{compat=1.3}
\definecolor{clr1}{RGB}{192,0,0}
\definecolor{clr2}{RGB}{0,192,0}
\definecolor{clr3}{RGB}{0,0,192}
\definecolor{clr4}{RGB}{0,192,192}
\definecolor{clr5}{RGB}{192,192,0}
\definecolor{clr6}{RGB}{192,0,192}
\definecolor{clr7}{RGB}{192,192,192}
\newcolumntype{L}[1]{>{\raggedright\let\newline\\\arraybackslash\hspace{0pt}}m{#1}}
\newcolumntype{C}[1]{>{\centering\let\newline\\\arraybackslash\hspace{0pt}}m{#1}}
\newcolumntype{R}[1]{>{\raggedleft\let\newline\\\arraybackslash\hspace{0pt}}m{#1}}
\newtheorem{theorem}{Theorem}
\newtheorem{lemma}[theorem]{Lemma}
\newtheorem{proposition}[theorem]{Proposition}
\newtheorem{definition}[theorem]{Definition}
\newtheorem{remark}[theorem]{Remark}
\newtheorem{example}[theorem]{Example}
\DeclareMathOperator*{\argmin}{arg\!\min}
\newcommand{\st}{\text{s.t.}}
\newcommand{\abs}[1]{\left\lvert#1\right\rvert}
\newcommand{\norm}[1]{\left\lVert#1\right\rVert}
\newcommand{\R}[1]{
	\ifthenelse{\equal{#1}{}}{\mathbb R}{\mathbb R^#1}%
}
\newcommand{\tr}{^T}
\newcommand{\Echg}{E_\text{chg}}
\newcommand{\Emin}{E_\text{min}}
\newcommand{\changed}[1]{{\color{black} #1}}
\begin{document}
	
	\title{Persistification of Robotic Tasks}
	
	\author{Gennaro Notomista and Magnus Egerstedt%
		\thanks{This work was sponsored by the U.S. Office of Naval Research through Grant No. N00014-15-2115.}%
		\thanks{G. Notomista \textit{(corresponding author)} is with the School of Mechanical Engineering, Institute for Robotics and Intelligent Machines, Georgia Institute of Technology, Atlanta, GA 30332 USA (email: g.notomista@gatech.edu)}%
		\thanks{M. Egerstedt is with the School of Electrical and Computer Engineering, Institute for Robotics and Intelligent Machines, Georgia Institute of Technology, Atlanta, GA 30332 USA (email: magnus@gatech.edu)}}%
	
	\maketitle
	
	\begin{abstract}
		In this paper we propose a control framework that enables robots to execute tasks persistently, i.\,e., over time horizons much longer than robots' battery life. This is achieved by ensuring that the energy stored in the batteries of the robots is never depleted. This is framed as a set invariance constraint in an optimization problem whose objective is that of minimizing the difference between the robots' control inputs and nominal control inputs corresponding to the task that is to be executed. We refer to this process as the \textit{persistification} of a robotic task. Forward invariance of subsets of the state space of the robots is turned into a control input constraint by using control barrier functions. The solution of the formulated optimization problem with energy constraints ensures that the robotic task is persistent. To illustrate the operation of the proposed framework, we consider two tasks whose persistent execution is particularly relevant: environment exploration and environment surveillance. We show the persistification of these two tasks both in simulation and on a team of wheeled mobile robots on the Robotarium.
	\end{abstract}
	
	\IEEEpeerreviewmaketitle

	\section{Introduction}
	\label{sec:intro}
	
	\IEEEPARstart{R}{obotic} \changed{tasks such as environmental monitoring and exploration, as well as sensor coverage, typically evolve over long time horizons. However, robots employed to execute these tasks can only store and carry a limited amount of energy in their batteries. For this reason, we can say that such tasks are not \textit{persistent}} insofar as either the robots cannot complete them before their batteries deplete, or they are required to be executed repeatedly and continuously. \changed{Although robots can be designed with greater energy capacity to handle longer-duration tasks, hardware solutions will never allow robots to operate perpetually.}
	
	The objective of this paper is presenting a control framework that provably guarantees the \textit{persistent} execution of robotic tasks. This is achieved by minimally modifying the nominal control inputs corresponding to the task that the robots have to execute in order to ensure the continuous execution of the task. As a result, the robots are allowed to freely execute their task whenever they have enough energy stored in their batteries, whereas they are forced to go and recharge their batteries whenever they are running out of energy.
	
	The deployment of robots for tasks such as environmental monitoring \cite{ollero2005multiple,fiorelli2006multi,leonard2010coordinated,graham2012adaptive}, environment exploration \cite{kuipers1991robot,o2015optimal,burgard2000collaborative} and sensor coverage \cite{cortes2004coverage,lee2015multirobot,krause2008near} has been extensively investigated. However, despite the fact that, in most cases, these tasks have to be executed over long time horizons, the limited availability of energy is not directly taken into account. \changed{Nevertheless, since low energy density is still a severe limiting factor in many mobile robotic applications, energy-awareness is a necessary feature with which robots have to be endowed \cite{morrissurvey}}. This line of inquiry has been followed in \cite{mitchell2015multi}, which considers a multi-robot, persistent coverage problem as a variant of the vehicle routing problem. A heuristic algorithm is proposed that is based on the cost-to-go-to-target, which can be adjusted online to take into account detours that pass through refueling stations present in the environment. A different approach is adopted in \cite{bethke2009multi}, where a formulation based on Markov decision processes is presented, that is able to ensure persistence surveillance coverage, including communication constraints and sensor failure models.
	
	Energy-aware control policies for persistent surveillance using a team of robots are considered in \cite{derenick2011energy}, where an optimization problem is defined in order to trade-off between the coverage mission and the conservation of a desired energy level. This is achieved by transitioning between coverage-directed and charging-directed behaviors, based on the current energy levels. The coverage performances are improved by employing ``standby'' robots that can be deployed when a robot is docked at a charging station. Similarly, in \cite{kamra2015mixed} a solution to the problem of long-duration missions is proposed, which considers the team of robots split into ``task robots'', which are in charge of executing tasks, and ``delivery robots'', which are deployed with the goal of providing the task robots with the energy resources they request. Also in \cite{mathew2015multirobot}, the strategy consists in making use of a team of robots dedicated to charging tasks (ground mobile docking stations), whose trajectories are planned based on the working robots' trajectories (UAVs), in order to guarantee randez-vous and recharge without suspending the operation of the working robots. Limited energy reserve is used as an additional constraint in \cite{liu2014energy} for a path planning strategy for the optimal deployment of multi-robot teams. A definition of persistency different from the one introduced in this paper is considered in \cite{smith2012persistent}, where environment persistent monitoring is solved by varying the robots' speed along predefined trajectories with the goal of keeping a regenerating environment information field bounded, analogously to what happens to a robot cleaner in an environment in which dust is generated. 
	
	The \textit{persistification} approach, through which a robotic task is rendered persistent, that we present in this paper, leverages control barrier functions (CBFs) to formulate an optimization problem where the task execution is constrained by the robots' energy level. \changed{As formalized in \cite{egerstedt2018robot} and \cite{notomista2019constraint}, the \textit{constraint-driven} formulation resulting from the use of CBFs is particularly amenable for long-duration robotic tasks, where \textit{goal-driven} strategies, derived starting from precise model assumptions, do not guarantee enough robustness.}
	
	CBFs can be used to synthesize a controller that ensures the forward invariance of a set $\mathcal C$ of the robot state space. This way, defining $\mathcal C$ as the set where the battery energy level of the robots executing the task is always greater than a desired minimum value, the persistification of a task can be formally guaranteed by ensuring the forward invariance of $\mathcal C$. The first definition of CBFs was given in \cite{wieland2007constructive}; in this paper, we are using the one introduced in \cite{ames2014control}. See \cite{ames2019control} for a survey on the subject. In \cite{ames2016control,nguyen2016exponential,wu2016safety}, variations have been introduced in order to employ CBFs with different categories of nonlinear dynamic systems for different control applications.
	
	The persistification framework described in the following extends the work in \cite{notomista2018persistification}. More specifically, the main contribution of this paper is the reformulation of the task persistification introduced in \cite{notomista2018persistification} in order to be able to handle more complicated robot and energy dynamics. The presented method generalizes to different charging models as well as different robotic tasks. The tasks will be encoded through different nominal inputs to the robots. This allows us to formally guarantee the persistent execution of a large number and variety of robotic tasks.
	
	The remainder of the paper is organized as follows: in Section~\ref{sec:problem}, the environment and robot models considered in this paper are presented, and the problem of persistification of robotic tasks is formulated. In Section~\ref{sec:control}, the control framework required to ensure robotic task persistence is introduced and its application is demonstrated by means of preliminary examples. Section~\ref{sec:applications} discusses the application of the presented framework to the persistification of robotic tasks. Section~\ref{sec:experiments} is dedicated to the validation of the proposed theoretical formulation through the implementation of two robotic tasks whose persistent execution is particularly relevant, both in simulation and on a team of mobile robots.
	
	\section{Problem Formulation}
	\label{sec:problem}
	
	The goal of this paper is the persistification of robotic tasks, i.\,e., ensuring that the battery energy level of the robots executing the tasks never falls below a minimum value. In this section, we introduce the models used for the robots, the environment in which the robots execute their task, and the robots' energy dynamics. We conclude the section by formalizing the persistification of robotic tasks.
	
	\subsection{Robot Model}
	
	Consider a collection of $N$ robots which are to be deployed to execute a task. The state of robot $i$, $i = 1,\ldots,N$, is denoted by $x_i \in X \subseteq \R{n}$. We model the robots using the control affine dynamical system:
	\begin{equation}
		\label{eq:robotmodelctrlaff}
		\dot x_i = f(x_i) + g(x_i) u_i,
	\end{equation}
	where $u_i \in U \subseteq \R{m}$ is robot $i$'s input, and $f$ and $g$ are two Lipschitz vector fields. Control affine dynamics arise in many robotic systems, whose models are derived using Euler-Lagrange equations (as observed in \cite{lavalle2006planning}), therefore they lend themselves to the description of a large variety of robotic platforms. Throughout this paper, we will assume that the $N$ robots are homogeneous, i.\,e., $f$ and $g$ in \eqref{eq:robotmodelctrlaff} are the same for each robot. This assumption does not compromise the proposed persistification strategy, which can be easily employed in the heterogeneous case, as will be explained in Section~\ref{sec:applications}.
	
	The robots considered in this paper are equipped with a rechargeable source of energy, e.\,g., a battery, and a technology required to recharge it, e.\,g., solar panels. In the following two subsections, we present a model for the robot energy dynamics which is coupled with the model of the environment in which the robots move. This encompasses, for instance, the situation in which solar power harvester circuits are employed to use solar panels to recharge the battery of the robots.
	
	\subsection{Environment Model}
	\label{subsec:environment}
	
	The environment, i.\,e., the domain in which the robots are deployed to perform their task, is represented by the compact set $\mathcal E \subset \R{p}$, with $p=2$ or $p=3$, for ground or aerial robots, respectively. The function
	\begin{equation}
		\pi : X \subseteq \R{n} \to \mathcal E \subset \R{p}
		\label{eq:maptostate}
	\end{equation}
	maps the robot state to its position expressed in a Cartesian reference system defined in the environment $\mathcal E$.
	
	Moreover, we consider the time-varying scalar field
	\begin{equation}
		I : \mathcal E \times \R{}_+ \to \mathcal I \subset \R{}_+,
		\label{eq:solarintensity}
	\end{equation}
	where $\mathcal I$ is an interval, defined over the environment, which represents a bounded time-varying non-negative physical quantity (e.\,g., solar light intensity) associated to each position in $\mathcal E$ at each time instant. We insist on $I$ being Lipschitz continuous in its first argument and differentiable in its second argument. The need for these assumptions will be explained in the next section.
	
	\subsection{Energy Model}
	\label{subsec:energy}
	
	Let $E_i \in \R{}_+$ be the battery energy level of robot $i$. The charging and discharging dynamics of the battery are modeled by:
	\begin{equation}
		\dot E_i = F(x_i,E_i,t) = k\left(w(x_i,E_i,t) - E_i\right),
		\label{eq:energymodel}
	\end{equation}
	where $k>0$ and
	\begin{equation}
		\label{eq:wxet}
		w(x_i,E_i,t) = \dfrac{1}{1+\frac{1-E_i}{E_i}e^{-\lambda(I(x_i,t)-I_\text{c})}}.
	\end{equation}
	\changed{In \eqref{eq:wxet}, $\lambda > 0$ and $0 < I_\text{c} < 1$ are two scalars whose meaning and effect on the energy dynamics will be explained in Remark~\ref{rmk:solarintensity}, and $I(x_i,t) : \mathcal E \times \R{}_+ \to [0,1]$ is a time-varying scalar-valued function introduced in \eqref{eq:solarintensity}.}
	
	The specific expression of $F(x_i,E_i,t)$ has been defined in order to model the exponential charging-discharging dynamics of batteries that are used to power robotic platforms in a large number of applications \cite{daniel2012handbook}.
	\begin{figure*}
		\centering
		\subfloat[Data collected during the course of a 24-hour experiment using a solar-powered robot. $E$, in blue, and $I$, in red, are measured battery energy and solar light intensity, respectively.]{
			\label{subfig:ei}
			\begin{tikzpicture} 
			\begin{axis}
			[
			no marks, 
			xlabel={data points}, 
			xtick={8,16,24}, 
			xticklabels={480,960,1440}, 
			ylabel={\textcolor{blue}{$E$},\textcolor{red}{$I$}}, 
			ymin=0, 
			ymax=1, 
			enlarge x limits=-1, 
			enlarge y limits=-1, 
			width=0.31\textwidth, 
			height=0.25\textwidth 
			]
			\addplot [line width=2pt, color=blue] table [x=t, y=V, col sep=space]{data/energy_intensity.txt};
			\addplot [line width=2pt, color=red] table [x=t, y=I, col sep=space]{data/energy_intensity.txt};
			\end{axis}
			\end{tikzpicture}
		}\hfill
		\subfloat[\changed{Comparison between measured and predicted values of $\dot E$: $\Delta\dot E$ represents the difference between the measured $\dot E$ and its value predicted using the model in \eqref{eq:energymodel}. The mean of $\Delta\dot E$ is depicted as a thick green line, whereas the shaded area represents the region of one standard deviation from the mean value.}]{
			\label{subfig:iedot}
			\begin{tikzpicture} 
			\begin{axis}
			[
			no marks, 
			xlabel={$E$}, 
			ylabel={$\Delta \dot E$}, 
			xmin=0, 
			xmax=1, 
			ymin=-1, 
			ymax=1, 
			enlarge x limits=-1, 
			enlarge y limits=-1, 
			ymajorgrids, 
			width=0.31\textwidth, 
			height=0.25\textwidth 
			]
			\addplot[line width=1pt, color=green!70!black] table [x=V, y=m, col sep=space]{data/model_meanstd.txt};
			\addplot[name path=ssup,color=green!70] table [x=V, y=ssup, col sep=space]{data/model_meanstd.txt};
			\addplot[name path=sinf,color=green!70] table [x=V, y=sinf, col sep=space]{data/model_meanstd.txt};
			\addplot[green!50,fill opacity=0.5] fill between[of=ssup and sinf];
			\end{axis}
			\end{tikzpicture}
		}\hfill
		\subfloat[Simulated battery charging and discharging dynamics: a comparison with Fig.~\protect\ref{subfig:ei} shows that the model proposed in \eqref{eq:energymodel} is able to reproduce the true dynamics of a real battery used in robotics applications.]{
			\label{subfig:cd}
			\begin{tikzpicture} 
			\begin{axis}
			[
			no marks, 
			xlabel={$t$}, 
			ylabel={$E$}, 
			enlarge x limits=-1, 
			enlarge y limits=-1, 
			legend entries={$I\ge I_\text{c}$,$I<I_\text{c}$}, 
			legend style={nodes=right}, 
			legend pos= north east, 
			width=0.31\textwidth, 
			height=0.25\textwidth 
			]
			\addplot [line width=2pt,color=black] table [x=t, y=Vc, col sep=space]{data/charge_discharge.txt};
			\addplot [dashed, line width=2pt,color=black] table [x=t, y=Vd, col sep=space]{data/charge_discharge.txt};
			\end{axis}
			\end{tikzpicture}
		}
		\caption{Validation of the energy model proposed in \eqref{eq:energymodel} using data collected during a long-term experiment with a solar-powered robot \cite{notomista2019slothbot}.}
		\label{fig:energymodel}
	\end{figure*}
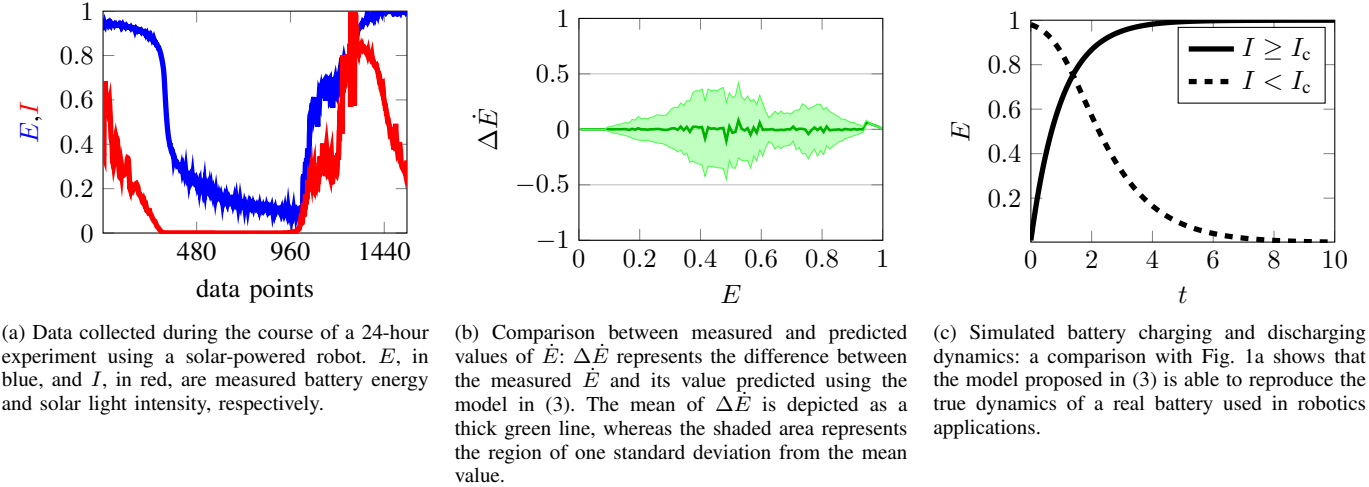
	In our previous work \cite{notomista2019slothbot}, the design of a solar-powered robot is presented and data of solar intensity and battery charge collected during the course of a 1-day-long experiment are reported (see Fig.~\ref{subfig:ei}). \changed{Figure~\ref{subfig:iedot} shows the difference $\Delta\dot E$ between the measured $\dot E$ and its value predicted using the energy model \eqref{eq:energymodel}. The thick green line depicts the mean of $\Delta\dot E$ and the shaded area denotes the region of one standard deviation around the mean.} Moreover, Fig.~\ref{subfig:cd} shows the simulated battery charge and discharge curves that can be obtained using \eqref{eq:energymodel} in the cases when $I>I_\text{c}$ and $I<I_\text{c}$, respectively: a comparison with Fig.~\ref{subfig:ei} indicates that the theoretical model is able to capture the exponential charging and discharging dynamics of real batteries commonly used for robotics applications.
	
	\begin{figure}[th]
		\centering
		\includegraphics[width=.49\textwidth]{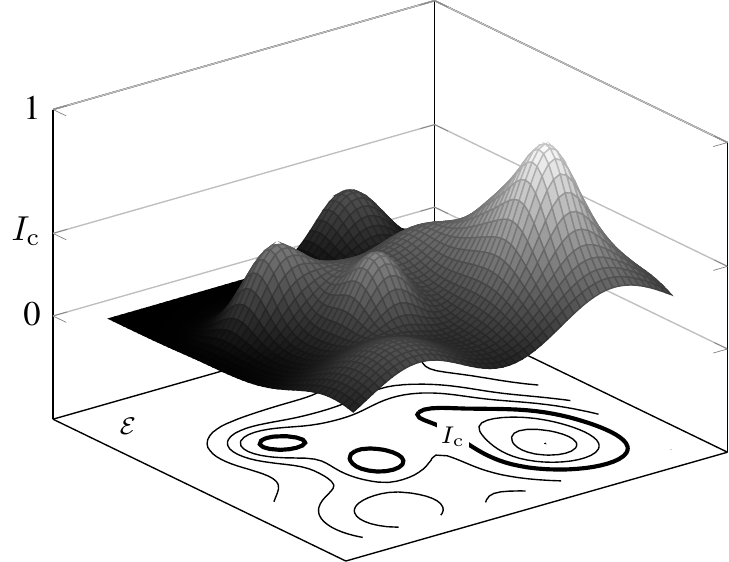}
		\caption{Example of the function $I(x_i,t)$ over the environment $\mathcal E$ at a given time instant: inside the bold level curve marked with $I_\text{c}$, $\dot E_i > 0$, whereas outside $\dot E_i < 0$. In other words, the robots can recharge their batteries in the regions bounded by the bold curves.}
		\label{fig:ixei}
	\end{figure}
	\begin{remark}
		\label{rmk:solarintensity}
		$I(x_i,t)$ can be interpreted as a time-varying power source distributed over the environment $\mathcal E$. For instance, in the case where it represents a measure of the solar light intensity at the position $x_i$ and time $t$, $F(x_i,E_i,t)$ can be used to describe the energy dynamics of a solar rechargeable battery.
		With the given dynamics we have that:
		\begin{itemize}
			\item $\dot E_i < 0$, i.\,e., the battery is discharging, whenever $I(x_i,t) < I_\text{c}$
			\item $\dot E_i > 0$, i.\,e., the battery is charging, when $I(x_i,t) > I_\text{c}$
			\item $\dot E_i = 0$, i.\,e., the value of $I(x_i,t) = I_\text{c}$ is such that the generated energy is equal to the energy required by the robot at time $t$.
		\end{itemize}
		Figure~\ref{fig:ixei} shows an example of what has been described: the surface plot of the function $I$ at a given time instant $t$ is depicted in grayscale (black to white for values of $I$ that go from 0 to 1). Below the surface plot, the contour plot of $I$ highlights the level curves where $I(x_i,t)=I_\text{c}$. Inside the regions bounded by the bold curves, characterized by $I(x_i,t)>I_\text{c}$, $\dot E_i > 0$, i.\,e., the robots can charge their batteries.
	\end{remark}
	\begin{figure}[th]
		\centering
		\subfloat[]{\label{subfig:cs1}\includegraphics[width=.32\textwidth]{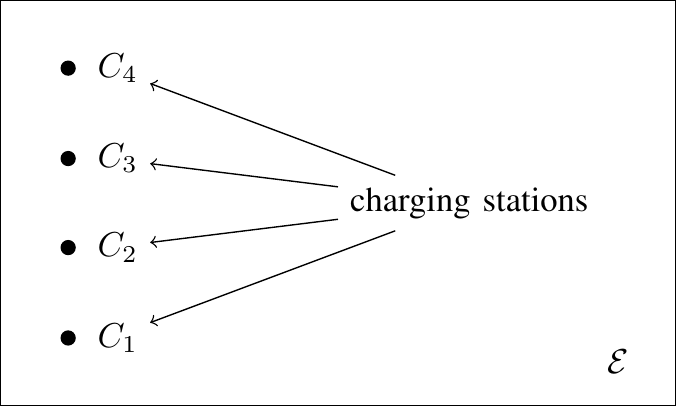}}\\
		\subfloat[]{\label{subfig:cs2}\includegraphics[width=.49\textwidth]{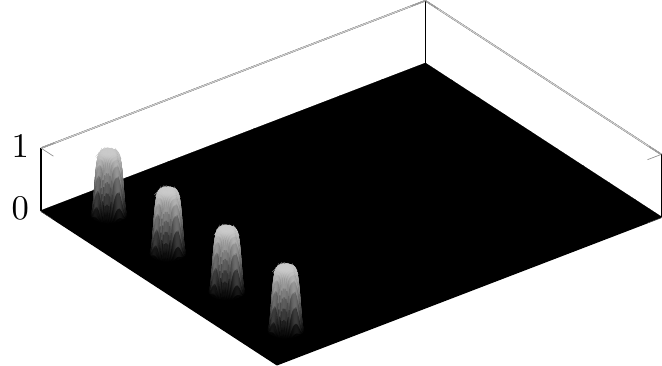}}
		\caption{Example of the modeling of lumped sources of energy (charging stations) via a suitable $I(x_i,t)$ function. In Fig.~\protect\ref{subfig:cs1}, a rectangular environment $\mathcal E$ is shown, and the positions of four charging stations, denoted by $C_1$ to $C_4$, are depicted as black dots. Fig.~\protect\ref{subfig:cs2} shows the surf plot of $I(x_i,t)$ corresponding to the charging stations of Fig.~\protect\ref{subfig:cs1} modeled by means of bump-like functions.}
		\label{fig:lumpedenergysources}
	\end{figure}
	\begin{remark}
		Lumped sources of energy, such as charging stations, can be also modeled using \eqref{eq:solarintensity}. Bump-like functions \cite{tu2008bump} at the locations of the charging stations can be employed to obtain the desired charging behavior, as depicted in Fig.~\ref{fig:lumpedenergysources}.
	\end{remark}
	
	\begin{remark}
		\label{rmk:energyinput}
		The proposed energy model does not depend on the control input $u_i$ of the robot. At first, this can seem inaccurate, however in Section~\ref{sec:applications} we will show why this choice was made and how it increases the robustness of the proposed persistification approach, ensuring that the robots will never run out of energy while executing the given task.
	\end{remark}
	
	\subsection{Task Persistification}
	
	The compound model of robot and energy dynamics is given by the following set of differential equations:
	\begin{equation}
		\setlength{\nulldelimiterspace}{0pt}
		\left\{\begin{IEEEeqnarraybox}[\relax][c]{l's}
			\dot x_i = f(x_i) + g(x_i) u_i\\
			\dot E_i = F(x_i,E_i,t).
		\end{IEEEeqnarraybox}\right.
		\label{eq:robotmodel}
	\end{equation}
	\changed{Indicating the augmented state of robot $i$ by
		\[
		z_i = \begin{bmatrix}x_i\\E_i\end{bmatrix},
		\]
	}the robot model \eqref{eq:robotmodel} can be rewritten in the following control affine form:
	\begin{equation}
		\dot z_i = \hat f(z_i,t) + \hat g(z_i) u_i,
		\label{eq:robotmodelaffine}
	\end{equation}
	where
	\begin{equation}
		\hat f(z_i,t) = \begin{bmatrix}
			f(x_i)\\
			F(x_i,E_i,t)
		\end{bmatrix}
		\quad\text{and}\quad
		\hat g(z_i) = \begin{bmatrix}
			g(x_i)\\
			0
		\end{bmatrix}.
	\end{equation}
	We will use
	\begin{equation}
		x = \begin{bmatrix}
			x_1\\
			\vdots\\
			x_N
		\end{bmatrix}\in X^{Nn},\quad
		u = \begin{bmatrix}
			u_1\\
			\vdots\\
			u_N
		\end{bmatrix}\in U^{Nm}
	\end{equation}
	to represent the joint states and inputs of the $N$ robots performing the task to be persistified.
	
	Let the task that has to be executed by the robots be encoded through the nominal input
	\begin{equation}
		\hat u_i(x,t),\quad i=1,\ldots,N,
	\end{equation}
	or collectively as:
	\begin{equation}
		\hat u : X^{Nn} \times \R{}_+ \to U^{Nm}.
		\label{eq:nominput}
	\end{equation}
	\changed{This modeling choice for the tasks that the robots have to execute is general insofar as it encompasses both reactive feedback controllers, through the dependence of $\hat u$ from the state $x$, and controllers generated by a high-level planning strategy, through the dependence of $\hat u$ from the time $t$.
		
		Examples of such tasks include, for instance, the stabilization of a dynamical system, the coordinated control of multi-robot systems, as well as robot collision avoidance strategies. The execution of robotic tasks using this formulation will be showcased in Section~\ref{sec:applications}.}
	
	\begin{definition}[Task Persistification]
		A task encoded through the nominal input $\hat u$, defined in \eqref{eq:nominput}, is persistified if the robots employed to perform it execute the input $u^\ast$, solution of the following program:
		\begin{equation}
			\begin{aligned}
				u^\ast(x,t) = \argmin_{u}  & \norm{u-\hat u(x,t)}^2\\
				\st \enskip & E_i(t) \in [\Emin,~\Echg]\quad\forall i\in \{1,\ldots, N\}
			\end{aligned}
			\label{eq:persistification}
		\end{equation}
		$\forall t \ge 0$, where $E_i$ is the energy of robot $i$ introduced in Section~\ref{subsec:energy}, and $\Emin>0$ and $\Echg$, with $1\ge\Echg>\Emin>0$, are the minimum and the maximum energy values, respectively, between which the energy of the robots should be confined.
		
		Task persistification is referred to as the process of turning a task characterized by the nominal input $\hat u$ to the persistified task characterized by the input $u^\ast$.
		
		\label{goal:gpt}
	\end{definition}
	
	\begin{remark}
		Definition~\ref{goal:gpt} is not tailored to the specific robot and environment models, which, as a matter of fact, can be quite different from the ones presented above, depending on the particular application that is considered.
	\end{remark}
	
	\section{Control Framework}
	\label{sec:control}
	
	As explained in Section~\ref{sec:intro}, the task persistification expressed as the optimization problem \eqref{eq:persistification}, will be realized by employing control barrier functions (CBFs). In the following subsection, we give a brief introduction to CBFs in their basic form. Then, in Section~\ref{subsec:maincontribution}, we present the extensions that represent the main theoretical contribution of this paper, namely a flexible framework to deal with time-varying and high relative degree CBFs and control Lyapunov functions (CLFs). The results obtained in this section will be employed in Section~\ref{sec:applications} to formulate the optimization program whose solution achieves the persistification of robotic tasks.
	
	\subsection{Control Barrier Functions}
	
	Control barrier functions have been used with the goal of ensuring \textit{safety}, intended as the invariance property of a subset of the state space, the \textit{safe set}. In the following we give the definition introduced in \cite{xu2015robustness} which will be extended in this paper.
	
	\begin{definition}[\cite{xu2015robustness}]
		\label{def:zcbf}
		Let $h:X\subset\R{n}\to\R{}$ be a continuously differentiable function, and $\mathcal C$ its zero superlevel set, i.\,e., $\mathcal C = \{x\in\R{n}~|~h(x)\ge0\}$. Then, for a control affine system
		\begin{equation}
			\label{eq:controlaffine}
			\dot x = f(x) + g(x) u,
		\end{equation}
		$x\in X\subset\R{n}$, $u\in U\subset\R{m}$, $h$ is a control barrier function (CBF) if there exists a locally Lipschitz extended class $\mathcal K$ function \cite{xu2015robustness} $\alpha$ such that
		\begin{equation}
			\sup_{u\in U} \left[ L_f h(x) + L_g h(x) u +\alpha(h(x))\right] \ge0,
		\end{equation}
		for all $x$ in the interior of the set $\mathcal C$. $L_f h(x)$ and $L_g h(x)$ represent the Lie derivative of $h$ in the directions of the vector fields $f$ and $g$, respectively.
	\end{definition}
	Starting from this basic definition, in the next sections we will develop tools required for the persistification of robotic tasks.
	
	\subsection{High Relative Degree CBFs and CLFs}
	\label{subsec:maincontribution}
	
	In the previous section, we introduced the robot and energy models, and in Remark~\ref{rmk:energyinput}, we have pointed out that the energy dynamics do not depend explicitly on the robot input $u_i$. This is intended to be a conservative choice which increases the robustness of the persistification strategy. In fact, the rate of charge and discharge of the battery, obtained when $I(x_i,t)=1$ and $I(x_i,t)=0$, respectively, are designed to be the rates obtained when the robot input $u_i$ attains its maximum value. This would correspond, for instance, to the fastest discharge rate obtained when the actuators of the robot are absorbing maximum power. Then, the actual discharge rate will always be slower than the modeled one, increasing, this way, the robustness of the proposed persistification strategy against unmodeled phenomena which can occur in the environment, or unmodeled robot dynamics. Nevertheless, the gained robustness comes at the price of increasing the relative degree of the CBF $h$, defined below.
	\begin{definition}[Relative degree of a CBF, based on \cite{nguyen2016exponential}]
		Given the nonlinear system \eqref{eq:controlaffine}, with $f$ and $g$ sufficiently smooth vector fields on a domain $\mathcal D$, the CBF $h : \R{n} \to \R{}$ has relative degree $\rho$, $1\le\rho\le n$, in $\mathcal D_0 \subset \mathcal D$ if the system
		\begin{equation}
			\begin{cases}
				\dot x = f(x) + g(x) u\\
				y = h(x)
			\end{cases}
		\end{equation}
		has a relative degree $\rho$, $\forall x \in \mathcal D_0$.
	\end{definition}
	In the following, we give an example in which high relative degree CBFs \cite{nguyen2016exponential} are required. This example will be generalized in Theorem~\ref{thm:nestedcbfs} for arbitrarily high relative-degree CBFs. Theorem~\ref{thm:nestedcbfs} will be then applied in order to formulate an optimization program which realizes the proposed persistification strategy.
	
	\begin{example}[Cascade of CBFs]
		Let us consider the nonlinear dynamical system in control affine form \eqref{eq:controlaffine} and a sufficiently smooth function $h_1 : \R{n} \to \R{}$ with relative degree 2 (i.\,e., $L_gh_1(x) = 0$ and $L_gL_fh_1(x) \neq 0$) that defines the superlevel set $\mathcal C_1 = \left\{ x \in \R{n} ~\vert~ h_1(x)\ge0 \right\}$. To prove the forward invariance of the set $\mathcal C_1$,
		we want $h_1$ to be a CBF for which the following must hold:
		\begin{equation}
			L_f h_1(x) + \alpha_1(h_1(x)) \ge 0,
		\end{equation}
		where $\alpha_1$ is a continuously differentiable extended class $\mathcal K$ function, and we used the fact that $h_1$ has relative degree 2.
		Then, we can define an additional function
		\begin{equation}
			h_2(x) = L_f h_1(x) + \alpha_1(h_1(x))
			\label{eq:additionalcbf}
		\end{equation}
		whose zero superlevel set is $\mathcal C_2 = \left\{ x \in \R{n} ~\vert~ h_2(x)\ge0 \right\}$. If there exists a positive constant $\gamma_1$ and a locally Lipschitz extended class $\mathcal K$ function $\alpha_2$ such that
		\begin{equation}
			\begin{aligned}
				\sup_{u \in  U} \big[ &L_f^2 h_1(x) + L_g L_f h_1(x)\,u \\
				&+ \gamma_1L_f h_1(x) + \alpha_2(h_2(x))\big] \ge 0,
			\end{aligned}
			\label{eq:cascade2}
		\end{equation}
		then the function $h_2$ is a CBF. The condition in \eqref{eq:cascade2} has been obtained using the definition \eqref{eq:additionalcbf} and employing $\alpha_1(s)=\gamma_1 s$. Its expression for an arbitrary high relative degree will be given in the next theorem. The existence of the CBF $h_2(x)$ ensures the forward invariance of the set $\mathcal C_2$, which, in turn, ensures the existence of the CBF $h_1(x)$. The forward invariance of the set $\mathcal C_1$ is thus proved.
		\label{ex:cascadecbf}
	\end{example}
	
	The technique shown in example~\ref{ex:cascadecbf} is generalized in the following theorem.
	\begin{theorem}
		\label{thm:nestedcbfs}
		Given a dynamical system \eqref{eq:controlaffine}, a sufficiently smooth CBF $h_1(x)$ with relative degree $\rho$ and a CBF $h_\rho(x)$ \changed{which can be evaluated recursively starting from $h_1(x)$ using the following equation}
		\begin{equation}
			h_{n+1}(x) = \dot h_n(x) + \alpha_n(h_n(x)),\quad\changed{1\le n<\rho},
			\label{eq:recursivecbf}
		\end{equation}
		with $\alpha_n$ continuously differentiable extended class $\mathcal K$ functions, we define the set $K_\rho(x)$ as
		\begin{equation}
			\begin{aligned}
				K_\rho(x) &= \biggl\{ u \in U ~\biggl\vert~ L_f^\rho h_1(x) + L_gL_f^{\rho-1}h_1(x) u\\
				&+ \sum_{i=1}^{\rho-1}\sum_{\mathscr C \in \binom{\rho-1}{i}}\prod_{j\in \mathscr C}\frac{\partial \alpha_j}{\partial h_j}L_f^{\rho-i}h_1(x) + \alpha_\rho(h_\rho(x)) \ge 0 \biggr.\biggr\},
			\end{aligned}
		\end{equation}
		where $\binom{\rho-1}{i}$ is the set of $i$-combinations from the set $\{1, \ldots, \rho-1\} \subset \mathbb N$ and $\alpha_\rho$ is a locally Lipschitz extended class $\mathcal K$ functions. Then, any Lipschitz continuous controller $u \in K_\rho(x)$ will render the set $\mathcal C_1 = \left\{ x \in \R{n} ~\vert~ h_1(x)\ge0 \right\}$ forward invariant.
	\end{theorem}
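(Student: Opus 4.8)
The plan is to lift the two-step construction of Example~\ref{ex:cascadecbf} to a $\rho$-step cascade, splitting the argument into (i) an explicit computation that identifies the defining inequality of $K_\rho(x)$ with the condition $h_{\rho+1}(x)\ge0$, and (ii) a forward-then-backward application of the standard comparison (Nagumo-type) argument underlying the notion of control barrier function in Definition~\ref{def:zcbf}. The key structural fact is that, because $h_1$ has relative degree $\rho$, the input $u$ does not appear in any of $h_2,\ldots,h_\rho$ defined by \eqref{eq:recursivecbf}; it enters for the first time in $h_{\rho+1}=\dot h_\rho+\alpha_\rho(h_\rho)$, precisely through the term $L_gL_f^{\rho-1}h_1(x)\,u$.

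First I would prove, by induction on $n$, a closed form for $h_n$ and $\dot h_n$ in terms of iterated Lie derivatives of $h_1$. The base cases are $h_1$ itself and $h_2=L_fh_1+\alpha_1(h_1)$, where $L_gh_1=0$ follows from $\rho\ge2$. For the inductive step I would differentiate $h_{n+1}=\dot h_n+\alpha_n(h_n)$ along \eqref{eq:controlaffine}, applying the chain rule to each term $\alpha_j(h_j)$, which generates a factor $\frac{\partial\alpha_j}{\partial h_j}\dot h_j$, and invoking the relative-degree identities $L_gL_f^{k}h_1=0$ for $0\le k\le\rho-2$ to discard all premature occurrences of $u$. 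The combinatorial bookkeeping — verifying that accumulating these chain-rule factors over the successive differentiations reproduces exactly the double sum $\sum_{i=1}^{\rho-1}\sum_{\mathscr C\in\binom{\rho-1}{i}}\prod_{j\in\mathscr C}\frac{\partial\alpha_j}{\partial h_j}L_f^{\rho-i}h_1$ — is the technical heart of this step; it is cleanest for the linear choice $\alpha_j(s)=\gamma_j s$ used in Example~\ref{ex:cascadecbf}, for which $\frac{\partial\alpha_j}{\partial h_j}=\gamma_j$ is constant and no higher derivatives of the $\alpha_j$ survive. Carrying the induction to $n=\rho$ then shows that $u\in K_\rho(x)$ is equivalent to $h_{\rho+1}(x)\ge0$.

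Second, I would run the invariance cascade, which is where the conclusion actually comes from. Since $u$ is Lipschitz and $f,g$ are smooth, the closed-loop vector field is Lipschitz, so the closed-loop trajectory exists and is unique. Membership $u\in K_\rho(x)$ along this trajectory gives $\dot h_\rho+\alpha_\rho(h_\rho)\ge0$, i.e. $\dot h_\rho\ge-\alpha_\rho(h_\rho)$; since $\alpha_\rho$ is a locally Lipschitz extended class $\mathcal K$ function, the comparison lemma yields $h_\rho(x(t))\ge0$ whenever $h_\rho(x(0))\ge0$, so $\mathcal C_\rho=\{h_\rho\ge0\}$ is forward invariant. I would then descend by finite backward induction: forward invariance of $\mathcal C_{n+1}$ means $h_{n+1}(x(t))=\dot h_n+\alpha_n(h_n)\ge0$, hence $\dot h_n\ge-\alpha_n(h_n)$ along the trajectory, and the same comparison argument makes $\mathcal C_n$ forward invariant. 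Iterating from $n=\rho-1$ down to $n=1$ delivers forward invariance of $\mathcal C_1$. As in the exponential-CBF literature, each comparison step must start from a nonnegative value, so this requires the standing assumption $x(0)\in\bigcap_{n=1}^{\rho}\mathcal C_n$, which I would state explicitly.

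I expect the main obstacle to be the combinatorial identity of the first step: confirming that repeated differentiation of the nested terms $\alpha_j(h_j)$ produces precisely the sum over $i$-combinations $\binom{\rho-1}{i}$ with the product coefficients $\prod_{j\in\mathscr C}\frac{\partial\alpha_j}{\partial h_j}$, and delimiting the hypotheses on the $\alpha_j$ (most transparently, linearity) under which this closed form is exact rather than merely a leading-order approximation. By contrast, the invariance cascade of the second step is a $\rho$-fold repetition of the single-CBF comparison argument and should be routine once the comparison lemma is in hand, the only delicate point being the propagation of the initial-condition requirement through the nested sets.
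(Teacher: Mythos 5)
Your proposal follows essentially the same route as the paper's proof: compute $\dot h_\rho$ in closed form using the relative-degree conditions, observe that $u\in K_\rho(x)$ enforces $\dot h_\rho\ge-\alpha_\rho(h_\rho)$ and hence forward invariance of $\mathcal C_\rho$, then descend recursively through \eqref{eq:recursivecbf} to conclude forward invariance of $\mathcal C_1$. The two caveats you flag---that the displayed coefficient sum is exact only when the $\alpha_j$ are linear (for general $\alpha_j$, differentiating the nested terms also produces second-derivative contributions such as $\frac{\partial^2\alpha_1}{\partial h_1^2}(L_fh_1)^2$, which the double sum omits), and that the initial condition must lie in $\bigcap_{n=1}^{\rho}\mathcal C_n$ for each comparison step of the cascade to start from a nonnegative value---are both genuine points that the paper's one-paragraph proof passes over silently.
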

	\begin{proof}
		Given the properties of class $\mathcal K$ functions \cite{kellett2014compendium} and using the fact that $h_1(x)$ has relative degree $\rho$, $\dot h_\rho(x)$ is given by the following expression:
		\begin{equation}
			\begin{aligned}
				\dot h_\rho &= L_f^\rho h_1(x) + L_gL_f^{\rho-1}h_1(x) u \\
				&+ \sum_{i=1}^{\rho-1}\sum_{\mathscr C \in \binom{\rho-1}{i}}\prod_{j\in \mathscr C}\frac{\partial \alpha_j}{\partial h_j}L_f^{\rho-i}h_1(x).
			\end{aligned}
		\end{equation}
		Consequently, the choice of $u \in K_\rho(x)$ renders the set $\mathcal C_\rho = \left\{ x \in \R{n} ~\vert~ h_\rho(x)\ge0 \right\}$ forward invariant. By recursively applying \eqref{eq:recursivecbf} $\rho-1$ times, $\mathcal C_1$ is proved to be forward invariant.
	\end{proof}
	
	\begin{remark}
		Employing a cascade of CBFs as shown in Example~\ref{ex:cascadecbf} and in Theorem~\ref{thm:nestedcbfs} is a technique which can be used not only to prove set forward invariance, but also stability of dynamical systems using high relative degree Lyapunov functions, as will be shown in the following. Set forward invariance and stability will be used, in Section~\ref{sec:applications}, to ensure that the energy stored in the battery of the robots performing a task is never depleted, realizing, this way, the desired task persistification.
	\end{remark}
	
	Note that the energy model proposed in Section~\ref{subsec:energy} is time-dependent, as it depends on the environment model, through the value $I(x_i,t)$ at $x_i$ at time $t$. As we will use CBFs to ensure the persistent execution of a task, we now extend the notion of CBFs to the case in which the function $h$ that defines the safe set $\mathcal C$ explicitly depends on time.
	
	For the nonlinear control affine system \eqref{eq:controlaffine}, we wish to ensure the forward invariance of a set $\mathcal C \subset \R{n}$ defined by the superlevel set of a function $h : \R{n} \times \R{}_+ \to \R{}$ as:
	\begin{equation}
		\mathcal C = \left\{ x \in \R{n} ~\vert~ h(x,t)\ge0 \right\}.
		\label{eq:safeset}
	\end{equation}
	With this objective, we extend the definition of CBFs given in \cite{xu2015robustness} to the time-varying case.
	\begin{definition}[Time-Varying CBFs]
		\label{def:tvzcbf}
		Given a dynamical system \eqref{eq:controlaffine} and a set $\mathcal C$ defined in \eqref{eq:safeset}, the function $h$ is a time-varying CBF defined on $\mathcal D \times \R{}_+$, with $\mathcal C \subseteq \mathcal D \subset \R{n}$, if there exists a locally Lipschitz extended class $\mathcal K$ function $\alpha$ such that, $\forall x \in \mathcal D$,
		\begin{equation}
			\sup_{u \in  U} \left[ \frac{\partial h}{\partial t} + L_f h(x,t) + L_g h(x,t)\,u + \alpha(h(x,t))\right] \ge 0.
			\label{eq:tvfcbf}
		\end{equation}
		
	\end{definition}
	\noindent Starting from the condition in \eqref{eq:tvfcbf}, we can define the set:
	\begin{equation}
		\begin{aligned}
			K(x,t) = \Bigg\{ u \in  U \,\Bigg\vert\, &\frac{\partial h}{\partial t}+ L_f h(x,t) + L_g h(x,t)\,u \\
			& + \alpha(h(x,t))\ge0 \Bigg\}.
		\end{aligned}
		\label{eq:inputset}
	\end{equation}
	
	The following lemma ensures that the set $\mathcal C$, defined in \eqref{eq:safeset}, is rendered forward invariant by the application of a control input $u\in K(x,t)$. This result will be used in Section~\ref{sec:applications} to express the constraints on the energy in \eqref{eq:persistification} in terms of the control input $u$.
	
	\begin{lemma}
		\label{thm:tvzcbf}
		Given a set $\mathcal C$ defined as in \eqref{eq:safeset}, if $h$ is a time-varying CBF on $\mathcal D \times \R{}_+$, then any Lipschitz continuous controller $u\in K(x,t)$, where $K(x,t)$ is given in \eqref{eq:inputset}, will render the set $\mathcal C$ forward invariant.
	\end{lemma}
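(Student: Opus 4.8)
The plan is to reduce the forward invariance of $\mathcal C$ to a scalar differential inequality along closed-loop trajectories and then close the argument with a comparison lemma, mirroring the time-invariant proof of~\cite{xu2015robustness} but carrying along the explicit $\partial h/\partial t$ term. Fix an initial time $t_0 \ge 0$ and a point with $h(x(t_0),t_0) \ge 0$, i.e.\ $x(t_0) \in \mathcal C$. Since $f$ and $g$ are Lipschitz and the feedback $u \in K(x,t)$ is assumed Lipschitz continuous, the closed-loop vector field $x \mapsto f(x) + g(x)\,u(x,t)$ is locally Lipschitz, so a unique solution $x(\cdot)$ exists on a maximal interval and $t \mapsto x(t)$ is absolutely continuous. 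Because $h$ is continuously differentiable in $x$ and differentiable in $t$, the composite $\eta(t) := h(x(t),t)$ is absolutely continuous and differentiable almost everywhere.

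First I would compute the total time derivative of $\eta$ along the trajectory,
\begin{equation}
  \dot\eta(t) = \frac{\partial h}{\partial t} + L_f h(x(t),t) + L_g h(x(t),t)\,u(x(t),t),
\end{equation}
and observe that the defining inclusion $u \in K(x,t)$ in~\eqref{eq:inputset} is \emph{exactly} the statement that this right-hand side is bounded below by $-\alpha(\eta(t))$. Hence along every closed-loop trajectory one obtains the differential inequality $\dot\eta(t) \ge -\alpha(\eta(t))$, valid for almost every $t$.

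Next I would introduce the scalar comparison system $\dot y = -\alpha(y)$ with $y(t_0) = \eta(t_0) \ge 0$. Because $\alpha$ is an extended class $\mathcal K$ function we have $\alpha(0) = 0$, so $y \equiv 0$ is an equilibrium; local Lipschitz continuity of $\alpha$ guarantees uniqueness of solutions, so no trajectory starting at $y(t_0) \ge 0$ can cross the zero equilibrium, giving $y(t) \ge 0$ for all $t \ge t_0$ in the interval of existence. Applying the comparison lemma to $\dot\eta \ge -\alpha(\eta)$ with $\eta(t_0) \ge y(t_0)$ then yields $\eta(t) \ge y(t) \ge 0$, i.e.\ $h(x(t),t) \ge 0$, so $x(t) \in \mathcal C$ throughout the maximal interval; a standard continuation argument extends this to all $t \ge t_0$, proving forward invariance of $\mathcal C$.

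The step I expect to require the most care is the comparison argument rather than the derivative computation: one must ensure $\eta$ is regular enough (absolute continuity) for the differential inequality to be integrated, and the role of the hypothesis that $\alpha$ is \emph{locally Lipschitz} extended class $\mathcal K$ is precisely to guarantee uniqueness for the comparison ODE, which is what prevents $\eta$ from leaking across the boundary $\{h = 0\}$ in finite time. The time-varying term $\partial h/\partial t$ introduces no new difficulty, since it is absorbed cleanly into the single scalar quantity $\dot\eta$; this is the only place where the differentiability-in-$t$ assumption on $h$ is used.
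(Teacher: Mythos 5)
Your proposal is correct and follows essentially the same route as the paper's proof: derive the scalar differential inequality $\dot h \ge -\alpha(h)$ from the definition of $K(x,t)$, then close the argument with the Comparison Lemma applied to $\dot\zeta=-\alpha(\zeta)$. The only cosmetic difference is that the paper invokes Lemma~4.4 of \cite{khalil2002nonlinear} to write the comparison solution as a class $\mathcal{KL}$ function and conclude positivity from there, whereas you argue nonnegativity of the comparison solution directly from $\alpha(0)=0$ and uniqueness; both are valid, and your version handles the boundary case $h(x(t_0),t_0)=0$ slightly more explicitly.
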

	\begin{proof}\textit{(Following the proof of Theorem 1 in \cite{ames2014control})}
		If $h$ is a time-varying CBF on $\mathcal D \times \R{}_+$ and $u \in K(x,t)$, from \eqref{eq:tvfcbf} and \eqref{eq:inputset} we can derive the following differential inequality:
		\begin{equation}
			\dot h(x,t)  \ge -\alpha(h(x,t)).
			\label{eq:diffineq}
		\end{equation}
		Now, consider the following boundary condition problem:
		\begin{equation}
			\begin{cases}
				\dot\zeta = -\alpha(\zeta)\\
				\zeta(t_0) = h(x(t_0),t_0) > 0,
			\end{cases}
		\end{equation}
		whose solution is given by $\zeta(t)=\beta(\zeta(t_0),t-t_0)$, $\beta$ being a class $\mathcal{KL}$ function (Lemma 4.4 in \cite{khalil2002nonlinear}). From \eqref{eq:diffineq}, making use of the Comparison Lemma (Lemma 3.4 in \cite{khalil2002nonlinear}), we have that:
		\begin{equation}
			h(x(t),t) \ge \beta(\zeta(t_0),t-t_0),~\;\forall t\ge t_0.
			\label{eq:hzt}
		\end{equation}
		Hence, if $h(x(t_0),t_0)>0$ and therefore $x(t_0) \in \mathcal C$, using the properties of class $\mathcal{KL}$ functions, \eqref{eq:hzt} ensures that $h(x(t),t)>0~\forall t\ge t_0$ and so $x(t) \in \mathcal C~\forall t\ge t_0$. Thus, $\mathcal C$ is forward invariant.
	\end{proof}
	\begin{remark}
		In case $\frac{\partial h}{\partial t} = 0$ and $L_g h(x,t) = 0$ we are not able to ensure the existence of a control input such that \eqref{eq:tvfcbf} holds, condition on which Lemma~\ref{thm:tvzcbf} relies. This case can be tackled by making use of a \textit{cascade} of control barrier functions and the result of Theorem~\ref{thm:nestedcbfs}.
	\end{remark}
	
	So far, we have described the condition in which the robots are executing the assigned task and we want them to keep their energy level $E_i$ within a certain interval $[\Emin,\Echg]$. This is realized, employing CBFs, by letting the robots reach regions of the state space where the field $I$ introduced in \eqref{eq:solarintensity} is larger than $I_\text{c}$. In these regions, as observed in Remark~\ref{rmk:solarintensity}, $\dot E_i > 0$ and the robots are charging.
	
	\changed{The use of CBFs will allow the robots to keep the energy stored in their batteries within the desired interval by synthesizing a controller solution of an optimization problem at each time instant. Nevertheless, although computationally efficient, this point-wise in time approach prevents the robots from planning their charging strategy. Therefore, in order to make sure that the robots leave the \textit{charging stations}---intended, in a broader sense, as the regions of the state space where $I(x_i,t)\ge I_\text{c}$---only when their battery is fully charged, i.\,e., when $E_i \approx \Echg$, we will make use of control Lyapunov functions (CLFs).} Similarly to what happens with CBFs encoding energy constraints, since the input $u_i$ does not directly show up in the expression of $\dot E_i$, a CLF defined to fully recharge the robots' battery will have relative degree higher than 1. Therefore, in the following, we will proceed analogously to what has been done for high relative degree CBFs to handle the case of high relative degree CLFs.
	
	Suppose that, besides the forward invariance of a set, one would like to stabilize the dynamical system \eqref{eq:controlaffine} around the origin $x=0$ using a CLF. Similarly to what has been done for the CBFs, the existence of a CLF $V : \R{n} \to \R{}$ suggests the definition of the following set:
	\begin{equation}
		K_V(x) = \left\{ u \in  U \;\left\vert\; - L_f V(x) - L_g V(x)\,u \right.\ge0 \right\}.
	\end{equation}
	It is easy to see how the choice of a control input $u \in K_V(x)$ will stabilize the system around $x = 0$.
	\begin{remark}
		In case $L_g V(x) = 0$, i.\,e., the relative degree of the Lyapunov function is greater than 1, we cannot ensure that the origin is stable.
	\end{remark}
	In Example~\ref{ex:cascadecbf} and in Theorem~\ref{thm:nestedcbfs}, a technique for dealing with high relative degree control barrier functions has been introduced: we will proceed here in a similar fashion. We first give an example that shows how to construct high-relative degree CLFs by employing CBFs. Then, we generalize this construction in Theorem~\ref{thm:nestedclfs}.
	\begin{example}[Constructing high relative degree CLFs using CBFs]
		Let us consider the nonlinear dynamical system in control affine form \eqref{eq:controlaffine} and a sufficiently smooth function $V : \R{n} \to \R{}$ with relative degree 2, i.\,e., $L_gV(x) = 0$ and $L_gL_fV(x) \neq 0$. In order for $V$ to be a CLF we must have $- L_f V(x) > 0$.
		We can then define the CBF $h_1(x) = - L_f V(x)$, and its superlevel set $\mathcal C_1 = \{ x \in \R{n} \vert - L_f V(x)\ge0 \}$, and let $u \in K_2^\prime(x) =  \left\{ u \in  U ~\left\vert~ - L_f^2 V(x) - L_gL_f V(x)\,u +\alpha(-L_fV(x)) \right.\ge0 \right\}$. where $\alpha$ is a locally Lipschitz class $\mathcal K$ function. This way, the set $\mathcal C_1$ can be rendered forward invariant. Consequently, the existence of the CLF $V(x)$ guarantees that the origin $x=0$ is (asymptotically) stable.
	\end{example}
	\begin{theorem}
		\label{thm:nestedclfs}
		Consider the dynamical system \eqref{eq:controlaffine}, a CLF $V(x)$ with relative degree $\rho$ defined with the objective of stabilizing the system state $x$ to $x^\ast$, and a CBF $h_\rho(x)$ \changed{which can be evaluated from $V(x)$ using the following recursive formula}:
		\begin{equation}
			\begin{cases}
				h_1(x) = -L_fV(x)\\
				h_{n+1}(x) = \dot h_n(x) + \alpha_n(h_n(x)),\quad\changed{1\le n<\rho},
			\end{cases}
			\label{eq:recursiveclf}
		\end{equation}
		where $\alpha_n$ are continuously differentiable extended class $\mathcal K$ functions. In addition, assume that $\{x^\ast\}$ is the largest invariant set in $\partial \mathcal C_1 = \{ x\,\vert\,h_1(x)=0\}$, boundary of the set $\mathcal C_1 = \{ x\,\vert\,h_1(x)\ge0\}$. Then, any Lipschitz continuous controller
		\begin{equation}
			\begin{aligned}
				u &\in K_\rho^\prime(x) = \biggl\{ u \in U ~\biggl\vert~ -L_f^\rho V(x) - L_gL_f^{\rho-1}V(x) u\\
				&+ \sum_{i=1}^{\rho-2}\sum_{\mathscr C \in \binom{\rho-2}{i}}\prod_{j\in \mathscr C}\frac{\partial \alpha_j}{\partial h_j}(-L_f^{\rho-1-i}V(x)) + \alpha_\rho(h_\rho(x)) \ge 0 \biggr.\biggr\},
			\end{aligned}
		\end{equation}
		where $\alpha_\rho$ is a locally Lipschitz class $\mathcal K$ function, will asymptotically stabilize the system to $x=x^\ast$.
	\end{theorem}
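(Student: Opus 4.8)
The plan is to reduce the claim to a combination of the high-relative-degree CBF machinery of Theorem~\ref{thm:nestedcbfs} with LaSalle's invariance principle. The key observation is that, because $V$ has relative degree $\rho$, we have $L_g V(x) = 0$, so that $\dot V = L_f V(x) = -h_1(x)$ along the trajectories of \eqref{eq:controlaffine}. Hence rendering the set $\mathcal C_1 = \{x \mid h_1(x) \ge 0\}$ forward invariant is exactly what is needed to guarantee $\dot V \le 0$, i.e., that $V$ is non-increasing along the closed-loop trajectories.

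First I would establish that $h_1 = -L_f V$ is a CBF of relative degree $\rho-1$. Indeed, since $V$ has relative degree $\rho$, the input $u$ first enters the $\rho$-th time derivative of $V$ through $L_g L_f^{\rho-1} V(x) \ne 0$; as $h_1 = -L_f V = -\dot V$, the input therefore first appears in $h_1^{(\rho-1)} = -V^{(\rho)}$, which is precisely the statement that $h_1$ has relative degree $\rho-1$. This is the point at which the hypothesis that $V$ is sufficiently smooth with a well-defined relative degree is used.

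Next I would apply the construction of Theorem~\ref{thm:nestedcbfs} to the CBF $h_1$, building the cascade $h_2,\ldots,h_\rho$ through the recursion \eqref{eq:recursiveclf} with the continuously differentiable extended class $\mathcal K$ functions $\alpha_n$. By that theorem, any Lipschitz continuous controller lying in the corresponding admissible-input set renders $\mathcal C_1$ forward invariant. The remaining step, which I expect to be the most tedious part, is purely computational: after substituting $h_1 = -L_f V$ and expanding each Lie derivative via $L_f^k h_1 = -L_f^{k+1} V$ and $L_g L_f^{\rho-2} h_1 = -L_g L_f^{\rho-1} V$, one verifies that the admissible-input set produced by Theorem~\ref{thm:nestedcbfs} coincides term-by-term with the set $K_\rho^\prime(x)$ stated above, including the combinatorial sum over the $i$-combinations $\mathscr C$ and the products of the partial derivatives $\partial \alpha_j/\partial h_j$.

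Finally, I would close the argument with LaSalle's invariance principle. Forward invariance of $\mathcal C_1$ guarantees $h_1(x(t)) \ge 0$, hence $\dot V(x(t)) = -h_1(x(t)) \le 0$, for all $t \ge 0$; together with the standard properness of the CLF $V$ this bounds the trajectories and forces them to converge to the largest invariant set contained in $\{x \mid \dot V(x) = 0\} = \{x \mid h_1(x) = 0\} = \partial \mathcal C_1$. By the assumption that $\{x^\ast\}$ is the largest invariant set in $\partial \mathcal C_1$, every trajectory converges to $x^\ast$, which establishes asymptotic stability and completes the argument. The main obstacle is thus not conceptual but the careful matching of the cascade expansion to $K_\rho^\prime(x)$ together with the correct invocation of LaSalle under the stated invariance hypothesis.
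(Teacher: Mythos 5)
Your proposal follows essentially the same route as the paper's own proof: it treats $h_1=-L_fV$ as a CBF of relative degree $\rho-1$, invokes the cascade construction of Theorem~\ref{thm:nestedcbfs} to render $\mathcal C_\rho$ and hence, by recursion through \eqref{eq:recursiveclf}, $\mathcal C_1$ forward invariant, and then concludes $x(t)\to x^\ast$ via LaSalle using the hypothesis that $\{x^\ast\}$ is the largest invariant set in $\partial\mathcal C_1$. The only addition is your explicit remark that properness of $V$ is needed to bound trajectories before applying LaSalle, which the paper leaves implicit; otherwise the arguments coincide.
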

	\begin{proof}
		Given the properties of class $\mathcal K$ functions and using the fact that $h_1(x)=-L_fV(x)$ has relative degree $\rho-1$, the following expression of $\dot h_\rho$ can be derived:
		\begin{equation}
			\begin{aligned}
				\dot h_\rho(x) &= -L_f^\rho V(x) - L_gL_f^{\rho-1}V(x) u\\
				&+ \sum_{i=1}^{\rho-2}\sum_{\mathscr C \in \binom{\rho-2}{i}}\prod_{j\in \mathscr C}\frac{\partial \alpha_j}{\partial h_j}(-L_f^{\rho-1-i}V(x)).
			\end{aligned}
		\end{equation}
		The choice of $u \in K_\rho^\prime(x)$ will render the set $\mathcal C_\rho = \left\{ x \in \R{n} ~\vert~ h_\rho(x)\ge0 \right\}$ forward invariant. Similarly to what has been done in Theorem~\ref{thm:nestedcbfs}, by the recursive application of \eqref{eq:recursiveclf}, $\mathcal C_1$ is proven to be forward invariant. Then, by LaSalle's Theorem (Theorem 4.4 in \cite{khalil2002nonlinear}), as $\{x^\ast\}$ is the largest invariant set in $\partial \mathcal C_1$, one has that $x(t)\to x^\ast$ as $t\to\infty$.
	\end{proof}
	
	\section{Application to Robotic Tasks}
	\label{sec:applications}
	
	In view of what has been introduced in the previous section, in this section we show that the persistification of robotic tasks, specified in Definition~\ref{goal:gpt}, can be framed as a constrained optimization problem.
	
	In order for the robots to be able to perpetually execute the given task, their energy $E_i,~i=1,\ldots,N$ must be strictly greater than zero at each point in time. Moreover, in order to extend the battery life of the specific types of batteries used in robotic applications, the lower bound for the residual energy should not be too low in order to protect the battery from deep discharge \cite{garche2000battery}. Considering the robot model \eqref{eq:robotmodel}, the constraints to control the residual energy can be formally encoded by the following CBF related to robot $i$:
	\begin{equation}
		h_{i1}(z_i) = (\Echg-E_i)(E_i-\Emin),
		\label{eq:energycbf}
	\end{equation}
	where $\Echg$ and $\Emin$ are the upper and lower bounds between which we want the energy $E_i$ to be confined, corresponding to charged and depleted battery, respectively.
	
	Following the procedure adopted in Example~\ref{ex:cascadecbf}, we start by evaluating the time derivative of $h_{i1}(z_i)$:
	\begin{equation}
		\begin{aligned}
			\dot h_{i1}(z_i) &= \frac{\partial h_{i1}}{\partial t} + L_f h_{i1}(z_i) + L_g h_{i1}(z_i)\, u_i\\
			&= \left[ \frac{\partial h_{i1}}{\partial x_i}~\frac{\partial h_{i1}}{\partial E_i} \right] f(z_i,t) + \left[ \frac{\partial h_{i1}}{\partial x_i}~\frac{\partial h_{i1}}{\partial E_i} \right] g(z_i)\, u_i\\
			&= (\Echg+\Emin-2E_i)F(x_i,E_i,t),
		\end{aligned}
	\end{equation}
	which does not depend on $u_i$ as the relative degree of $h_{i1}(z_i)$ is 2. Therefore, we define the CBF $h_{i2}(z_i)$ as done in \eqref{eq:additionalcbf}, namely:
	\begin{equation}
		h_{i2}(z_i) = \dot h_{i1}(z_i) + \gamma_{i1} h_{i1}(z_i),
		\label{eq:energycbf2}
	\end{equation}
	in which the locally Lipschitz class $\mathcal K$ function has been chosen to be a linear function $\alpha_{i1}(s)=\gamma_{i1}s$, with $\gamma_{i1} > 0$.
	
	Using Theorem~\ref{thm:nestedcbfs} and Lemma~\ref{thm:tvzcbf}, we can define the set of control inputs $u_i$ that will render the set $\mathcal C_{i1} = \{ z_i\in\mathbb R^{n+1}~\vert~h_{i1}(z_i)\ge0 \} = \{ E_i\in\mathbb R~\vert~\Emin \le E_i\le \Echg \}$ forward invariant:
	\begin{equation}
		\begin{aligned}
			&K_{2i}(z_i) = \biggl\{ u_i \in U ~\biggl\vert~ \frac{\partial h_{i2}}{\partial t} + L_f^2 h_{i1}(z_i)\\
			& + L_g L_f h_{i1}(z_i)\,u_i + \gamma_{i1}L_f h_{i1}(z_i) + \gamma_{i2}h_{i2}(z_i) \ge 0 \biggr.\biggr\},
		\end{aligned}
		\label{eq:inputenergycbf}
	\end{equation}
	in which $\alpha_{i2}(h_{i2}(z_i)) = \gamma_{i2}h_{i2}(z_i)$, with $\gamma_{i2} > 0$, is the locally Lipschitz extended class $\mathcal K$ function in \eqref{eq:cascade2}. The expressions of $\frac{\partial h_{i2}}{\partial t}$, $L_f^2 h_{i1}(z_i)$ and $L_g L_f h_{i1}(z_i)$ are reported in full in Appendix~\ref{app:formulas}.
	
	Note that, due to the control affine form of \eqref{eq:robotmodelaffine}, $u_i\in K_{2i}(z_i)$ is an affine constraint in $u_i$ and therefore it can be written as:
	\begin{equation}
		A_{\text{CBF}i}(z_i) u_i \le b_{\text{CBF}i}(z_i),
		\label{eq:cbfconstraint}
	\end{equation}
	with $A_{\text{CBF}i}(z_i) = -L_g L_f h_{i1}(z_i)$ and $b_{\text{CBF}i}(z_i)=\frac{\partial h_{i2}}{\partial t} + L_f^2 h_{i1}(z_i) + \gamma_{i1}L_f h_{i1}(z_i) + \gamma_{i2}h_{i2}(z_i)$.
	\begin{remark}
		Recalling the expression of $\dot E$ introduced in \eqref{eq:energymodel}, the behavior resulting by enforcing the constraint \eqref{eq:cbfconstraint} is the following: when the battery level $E_i$ is getting close to its minimum value $\Emin$, robot $i$ will drive towards areas of the environment $\mathcal E$ where the value of the function $I(x_i,t)$ is such that $\dot E_i \ge 0$, i.\,e., robot $i$ starts recharging its battery.
	\end{remark}
	
	During operation, the battery of each robot continuously discharges according to the dynamics in \eqref{eq:energymodel}. The mere application of the constraint \eqref{eq:cbfconstraint} prevents the battery level to go lower than $\Emin$ or higher than $\Echg$. However, it does not ensure that the battery will be completely charged before robot $i$ leaves areas of the environment where $\dot E_i > 0$. \changed{This behavior is desirable for two reasons: (i) this kind of charging/discharging cycles will extend the life of robot batteries \cite{daniel2012handbook,garche2000battery}, and (ii) it is more efficient from the point of view of the time spent  by the robots recharging their batteries allowing them to deviate from the nominal task input for less time. A quantitative justification of the second reason is given in Section~\ref{subsec:exploration}}.
	
	This behavior can be encoded using a CLF whose objective is that of driving the energy $E_i$ to $\Echg$. We can then define the following CLF:
	\begin{equation}
		V_i(z_i) = (\Echg-E_i)^2,
		\label{eq:chargingclf}
	\end{equation}
	related to robot $i$, whose time derivative is given by
	\begin{equation}
		\begin{aligned}
			\dot V_i(z_i) &=\frac{\partial V_i}{\partial t} + L_f V_i(z_i) + L_g V_i(z_i)\, u_i\\
			&= \left[ \frac{\partial V_i}{\partial x_i}~\frac{\partial V_i}{\partial E_i} \right] f(z_i,t) + \left[ \frac{\partial V_i}{\partial x_i}~\frac{\partial V_i}{\partial E_i} \right] g(z_i)\, u_i\\
			&= -2(\Echg-E_i)F(x_i,E_i,t).
		\end{aligned}
	\end{equation}
	As in the case of $h_{i1}(z_i)$ in the previous section, here $V_i(z_i)$ has relative degree 2 and therefore its time derivative is not a function of the control input $u_i$. Proceeding as before, we define the CBF $h_{i1}(z_i)= -L_f V_i(z_i)$. Its superlevel set $\mathcal C = \{ z_i\in\mathbb R^{n+1}\,\vert\,h_{i1}(z_i) \ge 0 \} = \{ z_i\in\mathbb R^{n+1}\,\vert\,\dot V_i(z_i) \le 0 \}$ is the set in which the value of the function $V_i(z_i)$ is not increasing. Its boundary $\partial \mathcal C = \{ z_i\in\mathbb R^{n+1}\,\vert\,h_{i1}(z_i)=0\}$ is the set where $\dot V_i(z_i) = 0$ which, if $\dot E\neq 0$, coincides with the $n$-dimensional manifold $\{ z_i\in\mathbb R^{n+1}\,\vert\,E_i=\Echg\}$. Therefore, by Theorem~\ref{thm:nestedclfs}, if
	\begin{equation}
		\begin{aligned}
			u_i \in K_{2i}^\prime(z_i) =&\biggl\{ u_i \in U ~\biggl\vert~ \frac{\partial h_{i2}}{\partial t} - L_f^2 V_i(z_i) \\
			&- L_g L_f V_i(z_i)\,u_i +\gamma_{i2} h_{i2} \ge 0\biggr.\biggr\},
		\end{aligned}
		\label{eq:inputchargingcbf}
	\end{equation}
	with $\gamma_{i2}>0$, the value of $E_i$ will asymptotically converge to $\Echg$. The expressions of $\frac{\partial h_{i2}}{\partial t}$, $L_f^2 V_i(z_i)$ and $L_g L_f V_i(z_i)$ for the energy model \eqref{eq:energymodel}-\eqref{eq:wxet} are reported in full in Appendix~\ref{app:formulas}. Note that $u_i\in K_{2i}^\prime(z_i)$ is an affine constraint in $u_i$, and therefore it can be written as:
	\begin{equation}
		A_{\text{CLF}i}(z_i) u_i \le b_{\text{CLF}i}(z_i),
		\label{eq:clfconstraint}
	\end{equation}
	with $A_{\text{CLF}i}(z_i) = L_g L_f V_i(z_i)$ and $b_{\text{CLF}i}(z_i)=\frac{\partial h_{i2}}{\partial t} - L_f^2 V_i(z_i)+\gamma_{i2} h_{i2}(z_i)$.
	
	In order to combine the CBF constraints \eqref{eq:cbfconstraint} and the CLF constraints \eqref{eq:clfconstraint}, the following nonlinear program can be formulated:
	\begin{equation}
		\begin{aligned}
			u^\ast = \argmin_{u,\delta}  &\norm{u-\hat u}^2+\delta\tr\kappa\delta\\
			\st & \begin{aligned}
				\begin{bmatrix}
					A_\text{CBF}(z_i) & 0_N\\
					A_\text{CLF}(z_i) & -I_N
				\end{bmatrix} &\begin{bmatrix}u\\ \delta\end{bmatrix} \le
				\begin{bmatrix}
					b_\text{CBF}(z_i)\\
					b_\text{CLF}(z_i)
				\end{bmatrix}
			\end{aligned},
		\end{aligned}
		\label{eq:qp}
	\end{equation}
	where
	\begin{equation}
		\begin{gathered}
			A_\text{CBF}(z_i) = \mathrm{diag}\left( A_{\text{CBF}1}(z_i),~\ldots,~A_{\text{CBF}N}(z_i) \right),\\
			A_\text{CLF}(z_i) = \mathrm{diag}\left( A_{\text{CLF}1}(z_i),~\ldots,~A_{\text{CLF}N}(z_i) \right),\\
			b_\text{CBF}(z_i) = \begin{bmatrix}
				b_{\text{CBF}1}(z_i)\\
				\vdots\\
				b_{\text{CBF}N}(z_i)
			\end{bmatrix},~
			b_\text{CLF}(z_i) = \begin{bmatrix}
				b_{\text{CLF}1}(z_i)\\
				\vdots\\
				b_{\text{CLF}N}(z_i)
			\end{bmatrix},
		\end{gathered}
	\end{equation}
	$u = [u_1^T,\ldots,u_N^T]$, and $I_N$ and $0_N$ are $N\times N$ identity and zero matrices, respectively. Moreover, $\delta = \left[\delta_1,\ldots,\delta_N\right]\tr \in \mathbb R^N$ is a vector of relaxation parameters introduced to make the constraints in \eqref{eq:qp} always feasible. The matrix $\kappa = \mathrm{diag}(\kappa_i)$ is a diagonal, positive definite, weighting matrix for $\delta$. Furthermore, the nominal input $\hat u$ is what encodes the task introduced in Definition~\ref{goal:gpt}.
	
	Both \eqref{eq:cbfconstraint} and \eqref{eq:clfconstraint} are affine functions of the optimization variables. Moreover, the cost $\norm{u-\hat u}^2+\delta\tr\kappa\delta$ is a convex quadratic form. \changed{Hence, \eqref{eq:qp} is a convex quadratic program (QP) and, as such, can be efficiently solved---see, e.\,g., \cite{boyd2004convex}---and employed to generate controllers in an online fashion as shown, for instance, in \cite{ames2013towards}}.
	\begin{remark}
		As discussed in Section~\ref{sec:problem}, the proposed persistification approach also works when the $N$ robots are not homogeneous, i.\,e., when they are characterized by different dynamic models. In fact, in this case, once the constraints of the optimization program \eqref{eq:qp} have been changed accordingly, the solution $u^\ast$ guarantees the heterogeneous multi-robot system persistently executes the task characterized by the nominal input $\hat u$.
	\end{remark}
	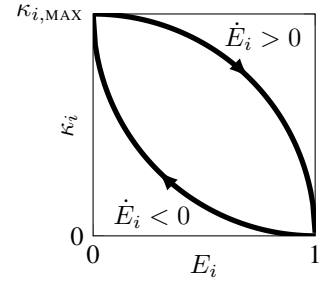
\begin{figure}[th]
		\centering
		\begin{tikzpicture}
		\begin{axis}
		[
		xlabel={$E_i$},
		ylabel={$\kappa_i$},
		no marks,
		xtick={0,1},
		ytick={0,1},
		xticklabels={0,1},
		yticklabels={0,$\kappa_{i,\text{\sc max}}$},
		enlargelimits=false,
		ylabel shift = -30 pt,
		xlabel shift = -10 pt,
		width=0.25\textwidth, 
		height=0.25\textwidth 
		]
		\addplot[
		line width=2,
		domain=0:1,
		samples=100,
		black,
		postaction={decorate, decoration={markings,
				mark=at position 0.5 with {\arrow[ultra thick]{latex};},
		}}
		] {sqrt(1-x^2)};
		\addplot[
		line width=2,
		domain=0:1,
		samples=100,
		black,
		postaction={decorate, decoration={markings,
				mark=at position 0.5 with {\arrowreversed[ultra thick]{latex};},
		}}
		] {1-sqrt(1-(x-1)^2)};
		\node[anchor=west] () at (axis cs:0.55,0.9) {$\dot E_i > 0$};
		\node[anchor=west] () at (axis cs:0.05,0.1) {$\dot E_i < 0$};
		\end{axis}
		\end{tikzpicture}
		\caption{Example of the function $\kappa_i(E_i)$ that can be employed in order to let robot $i$ charge its battery up to $\Echg$ once it has started charging. The two branches of the curve are labeled with $\dot E_i>0$ and $\dot E_i<0$: at a given value of $E_i$, the value of $\kappa_i$ is higher for a robot that is recharging its battery ($\dot E_i>0$) compared to the one of a robot for which $\dot E_i<0$. This way, once a robot starts recharging its battery, the weight of its corresponding component of the vector $\delta$ in \eqref{eq:qp} is larger. This ensures that the robot charges its battery until $\Echg$.}
		\label{fig:kappa}
	\end{figure}
	\begin{remark}
		In order to achieve the desired charging behavior, discussed in Section~\ref{sec:control}, $\kappa_i$ can be made a function of $E_i$ in such a way that robot $i$ charges up to $E_i = \Echg$ once it started charging and, at the same time, discharge down to $E_i = \Emin$ while operating. A candidate mapping $\kappa_i(E_i) : [0,1] \to [0,\kappa_{i,\text{\sc max}}]$ is depicted in Fig.~\ref{fig:kappa}. Note that the function $\kappa_i$ changes according to the sign of $\dot E_i$. This way, the weight of the corresponding relaxation parameter $\delta_i$ changes when the energy $E_i$ reaches $\Echg$ or $\Emin$.
	\end{remark}
	
	We conclude this section stating the proposition which ensures the task persistification as defined in Definition~\ref{eq:persistification}.
	\begin{proposition}
		The control law $u^\ast$, solution of the QP \eqref{eq:qp}, makes the robots execute the persistified task corresponding to the task encoded through the control input $\hat u$.
	\end{proposition}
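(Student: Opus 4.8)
The plan is to show that the solution of the QP \eqref{eq:qp} delivers the two defining features of the persistified task in Definition~\ref{goal:gpt}: that the energy of every robot remains in $[\Emin,\Echg]$ for all $t\ge0$, and that among the inputs achieving this the deviation from the nominal input $\hat u$ is minimized. The first feature is a set-invariance statement, which I would establish by appealing to the machinery of Section~\ref{sec:control}; the second follows by reading off the objective of \eqref{eq:qp}.

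First I would observe that the constraint $E_i(t)\in[\Emin,\Echg]$ appearing in \eqref{eq:persistification} is exactly the forward invariance of the set $\mathcal C_{i1}=\{z_i\mid h_{i1}(z_i)\ge0\}$, with $h_{i1}$ the energy CBF \eqref{eq:energycbf}, since the zero superlevel set of $(\Echg-E_i)(E_i-\Emin)$ is precisely $\{\Emin\le E_i\le\Echg\}$. Because $h_{i1}$ has relative degree $2$ (its derivative does not contain $u_i$), the constraint cannot be imposed directly on the input; instead I would invoke the cascade construction of Example~\ref{ex:cascadecbf}, forming $h_{i2}$ as in \eqref{eq:energycbf2} and applying Theorem~\ref{thm:nestedcbfs} together with the time-varying Lemma~\ref{thm:tvzcbf}. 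The latter is needed because $h_{i1}$ depends on $t$ through $I(x_i,t)$. This shows that any Lipschitz input in the set $K_{2i}(z_i)$ of \eqref{eq:inputenergycbf} recursively renders $\mathcal C_{i1}$ forward invariant.

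Next I would identify the CBF rows of the QP with this admissible set: the block constraint $A_\text{CBF}(z_i)u\le b_\text{CBF}(z_i)$ in \eqref{eq:qp} is, row by row, exactly the membership condition $u_i\in K_{2i}(z_i)$ written as \eqref{eq:cbfconstraint}. Consequently every feasible point of \eqref{eq:qp} — in particular the optimizer $u^\ast$ — satisfies $u_i\in K_{2i}(z_i)$ for all $i$, so by the previous step the energy of each robot is confined to $[\Emin,\Echg]$ for all $t\ge0$, which is the invariance requirement of Definition~\ref{goal:gpt}. It then remains only to note that the objective of \eqref{eq:qp} contains $\norm{u-\hat u}^2$, the very cost of \eqref{eq:persistification}, so that $u^\ast$ is the minimal-deviation input compatible with the energy bound.

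The main obstacle is reconciling the QP \eqref{eq:qp} with the bare program \eqref{eq:persistification}: the QP carries the extra relaxed CLF rows and the penalty $\delta\tr\kappa\delta$, and I must argue these do not compromise persistification. The key points are that (i) the CLF constraint is \emph{soft} — the slack $\delta$ is a free variable, so it can never render \eqref{eq:qp} infeasible, whereas the energy CBF rows stay \emph{hard}; and (ii) $K_{2i}(z_i)\ne\emptyset$ whenever $h_{i1}$ qualifies as a CBF in the cascade sense of Example~\ref{ex:cascadecbf}, so the hard rows alone are feasible. Hence \eqref{eq:qp} admits a solution at every $(z_i,t)$ and the forward-invariance conclusion holds for all time. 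The CLF term and the weight $\kappa$ therefore only shape the charging transient, driving $E_i$ to $\Echg$ once charging begins, without ever overriding the hard energy bound; thus $u^\ast$ realizes the persistified task of Definition~\ref{goal:gpt}.
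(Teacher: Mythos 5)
Your proposal is correct and follows essentially the same route as the paper: the paper's own proof is a single line citing Theorem~\ref{thm:nestedcbfs}, Lemma~\ref{thm:tvzcbf} and Theorem~\ref{thm:nestedclfs}, and your argument is a faithful, more explicit expansion of exactly that chain --- the CBF cascade for the hard energy-invariance constraint, the time-varying lemma for the dependence on $t$ through $I(x_i,t)$, and the relaxed CLF rows for the charging behavior. If anything, you are more careful than the paper about why the soft CLF constraints and the slack $\delta$ cannot compromise the hard CBF rows (and hence why feasibility of the QP reduces to non-emptiness of $K_{2i}(z_i)$), a point the paper leaves implicit.
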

	\begin{proof}
		Follows from the application of Theorem~\ref{thm:nestedcbfs}, Lemma~\ref{thm:tvzcbf} and Theorem~\ref{thm:nestedclfs}.
	\end{proof}
	
	In the next section, the developed theoretical control framework will be validated by means of simulations and experiments. Two robotic tasks are introduced and the results of the their persistent implementation are reported.
	
	\section{Simulations and Experiments}
	\label{sec:experiments}
	
	In this section, two robotic tasks, whose persistent application is particularly relevant, are presented to showcase the persistification strategy presented in this paper. The tasks consist in environment exploration and environment surveillance. Both tasks are typically required to be executed for a long period of time. In the case of environment exploration, the long execution time can be due to the size and/or the dynamic nature of the environment to explore \cite{girdhar2016modeling,bellingham2007robotics}. As regards the environment surveillance, the time-scale of the observed environment phenomena is the factor that determines the length of the task. Nowadays, longevity is still a limiting factor for the deployment of robotic systems for environment surveillance and monitoring, as described in \cite{dunbabin2012robots}. The persistification of these two tasks through the control framework described in the previous section is discussed in Section~\ref{subsec:exploration} and Section~\ref{subsec:surveillance}, respectively.
	
	\subsection{Environment Exploration}
	\label{subsec:exploration}
	
	The first application that is considered is that of environment exploration. There are many approaches to this task in literature, as discussed in Section~\ref{sec:intro}, and many solutions have been proposed. Here we consider the one presented in \cite{miller2013trajectory}, since the result of a trajectory optimization problem provides directly the nominal inputs to the robots. The optimal trajectories are evaluated by minimizing the distance from ergodicity \cite{mathew2011metrics}. This results in a trajectory that, instead of maximizing information in a greedy way, distributes information according to its probability density function defined over the environment.
	
	Following what is presented in \cite{miller2013trajectory}, let us start by defining the following ergodic metric as in \cite{mathew2011metrics}:
	\begin{equation}
		\epsilon = \sum_{k=0}^{K} \Lambda_k \abs{c_k-\varphi_k}^2,
		\label{eq:ergodicmetric}
	\end{equation}
	where $c_k$ are the time-averaged Fourier coefficients of the trajectory, $\varphi_k$ are the Fourier coefficients of a spatial distribution of information $\phi : \mathcal E \mapsto \R{}_+$, $\mathcal E$ being the environment to explore, and
	\begin{equation}
		\Lambda_k = \frac{1}{(1+\xi\tr \xi)^{\frac{3}{2}}},
	\end{equation}
	with $\xi \in \mathscr Z = \{ 0, 1, \ldots, K-1\} \times \{ 0, 1, \ldots, K-1 \}$, $K$ being the number of employed Fourier basis functions.
	
	By minimizing this ergodic measure at time $\bar t$ over a time horizon $T$, the nominal trajectory $\hat x_i(t)$ for $t \in [\bar t,~\bar t+T]$ of each robot is obtained. Solving this optimization problem at every time instant, in a model predictive control fashion, provides the nominal input $\hat u_i$ that is to be executed at time $\bar t$ in order to track the trajectory $\hat x_i$ \cite{miller2013trajectory}. This $\hat u_i$ can be plugged in the QP defined in \eqref{eq:qp} allowing, this way, a straightforward application of the persistification framework presented in this paper to the environment exploration task.
	
	The persistent environment exploration task has been implemented and tested in a simulation environment. For the simulated experiment a planar robot is given the task of exploring an environment $\mathcal E$ on which a spatial distribution of information has been defined. The information is assumed to be distributed according to the Gaussian density function
	\begin{equation}
		\phi : x \in \mathcal E \subset \R{2} \mapsto e^{-\frac{\norm{x-x_\text{o}}^2}{\sigma^2}} \in \R{}_+,
	\end{equation}
	where, for the experiments, the following values are used: $x_\text{o}=[0,0]^T$ and $\sigma^2=0.1$. The time-varying environment field $I$ is modeled as a mixture of time-varying Gaussians of the following form:
	\begin{equation}
		I(x,t) = e^{-\norm{x-M_1(t) x_\text{c}}^2} + e^{-\norm{x-M_2(t) x_\text{c}}^2},
	\end{equation}
	where $x_\text{c} = [1,1]^T$ and
	\begin{equation}
		M_1(t) = \begin{bmatrix}
			-1 & 0\\
			0 & \sin{(2t)}
		\end{bmatrix},\quad
		M_2(t) = \begin{bmatrix}
			\sin{(2t)} & 0\\
			0 & 1
		\end{bmatrix}.
	\end{equation}
	In the case of robots that are able to exploit solar power to recharge their batteries, this choice of the function $I$ simulates the sunlight intensity that is characterized by a periodic expression over a spatially fixed environment. The other values required to model $\dot E_i$ \eqref{eq:energymodel} in \eqref{eq:robotmodel} are set to $I_\text{c}=0.85$ and $\lambda = 3$.
	
	The optimization problem aimed at minimizing the ergodic cost \eqref{eq:ergodicmetric} is solved offline and the resulting trajectory is given to the robot as a reference for the tracking controller. The input required to track the trajectory is wrapped by the QP \eqref{eq:qp} in such a way that the robot explores the environment while satisfying at the same time the energy constraint that prevents its battery level to go below the lower threshold $\Emin$. The result of this controller is a persistified environment exploration.
	
	\begin{figure}
		\centering
		\subfloat[]{\label{subfig:bigfig_simulations_1}\includegraphics[width=0.245\textwidth]{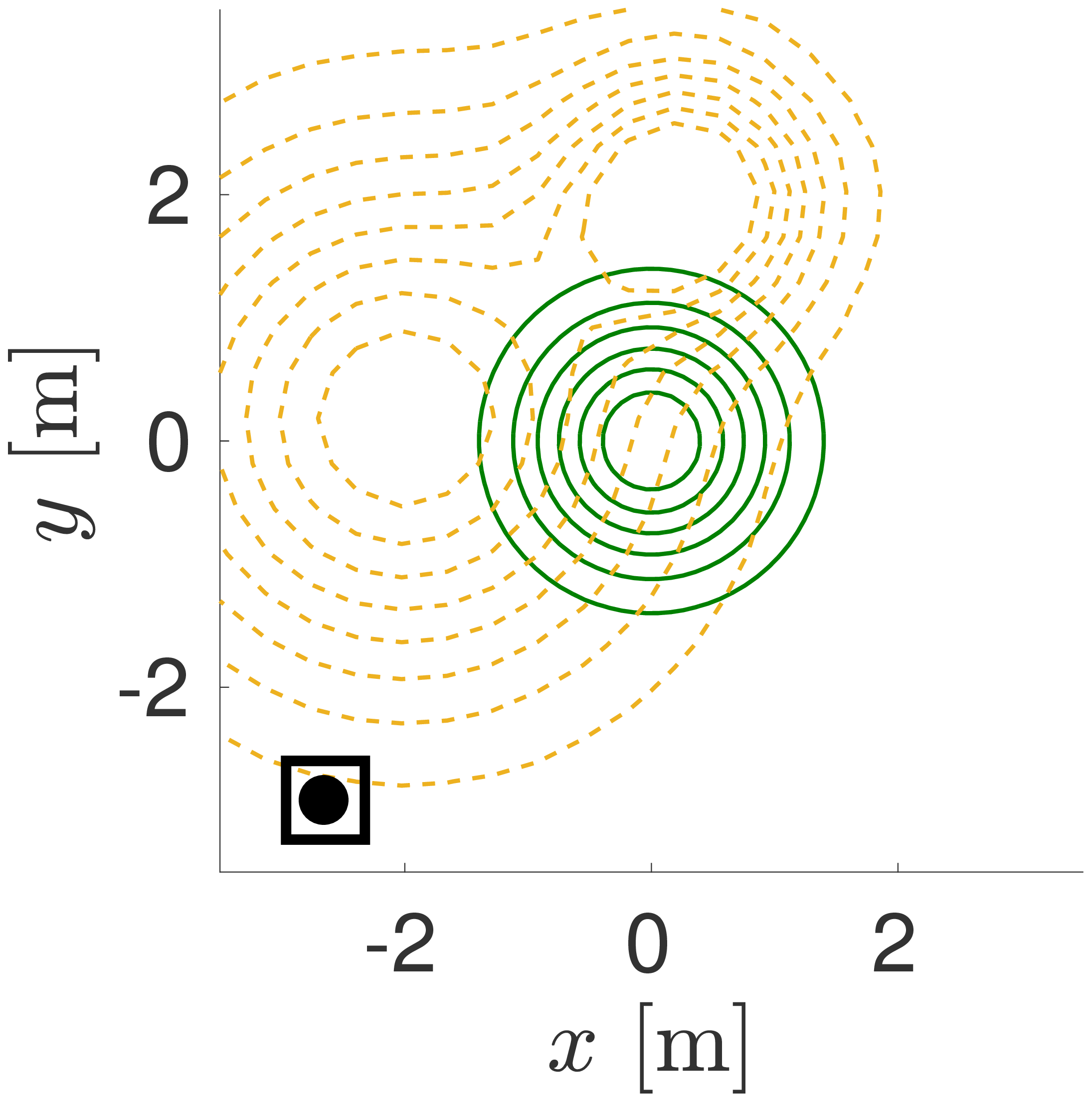}}%
		\subfloat[]{\includegraphics[width=0.245\textwidth]{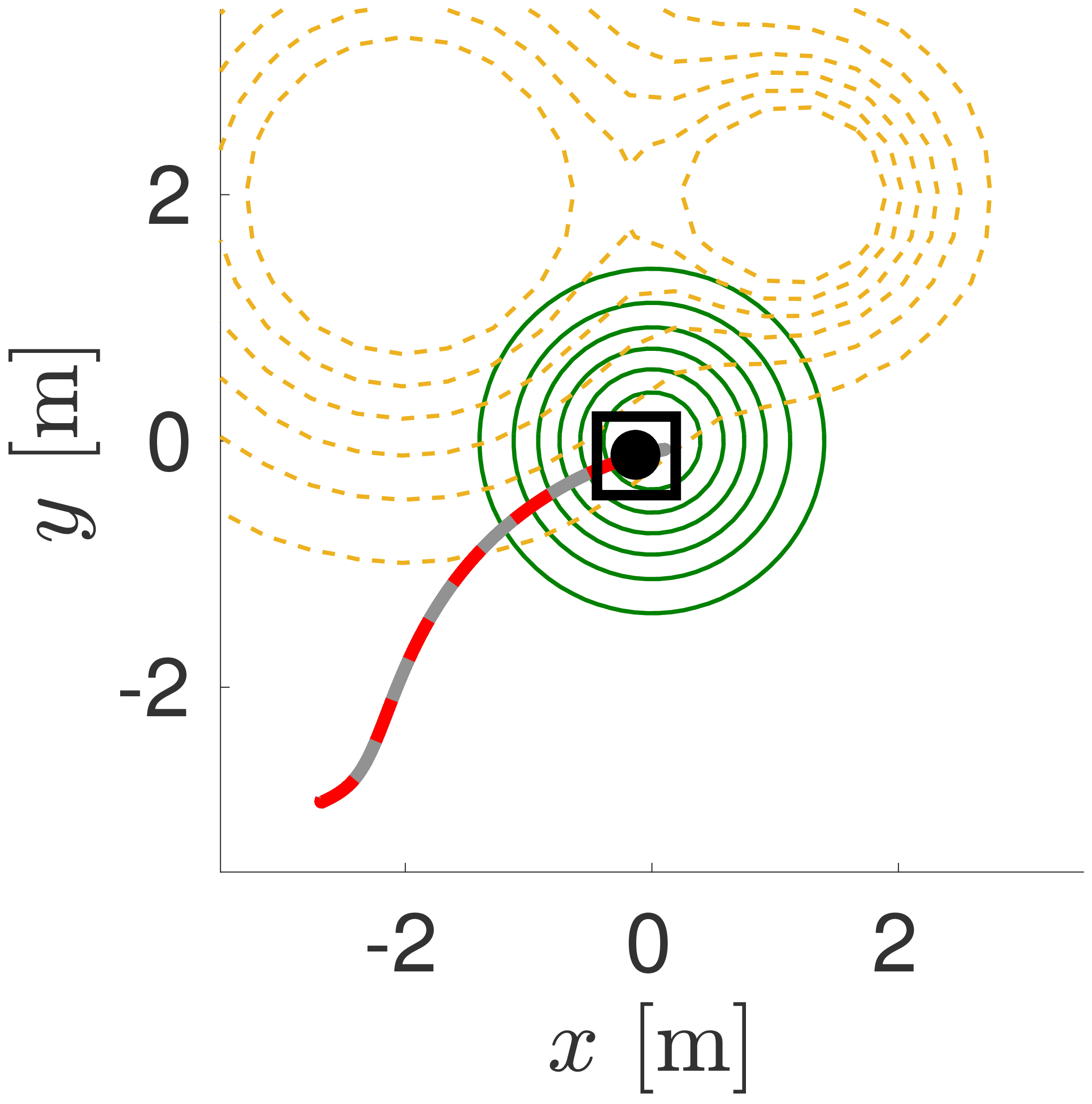}}\\
		\subfloat[]{\includegraphics[width=0.245\textwidth]{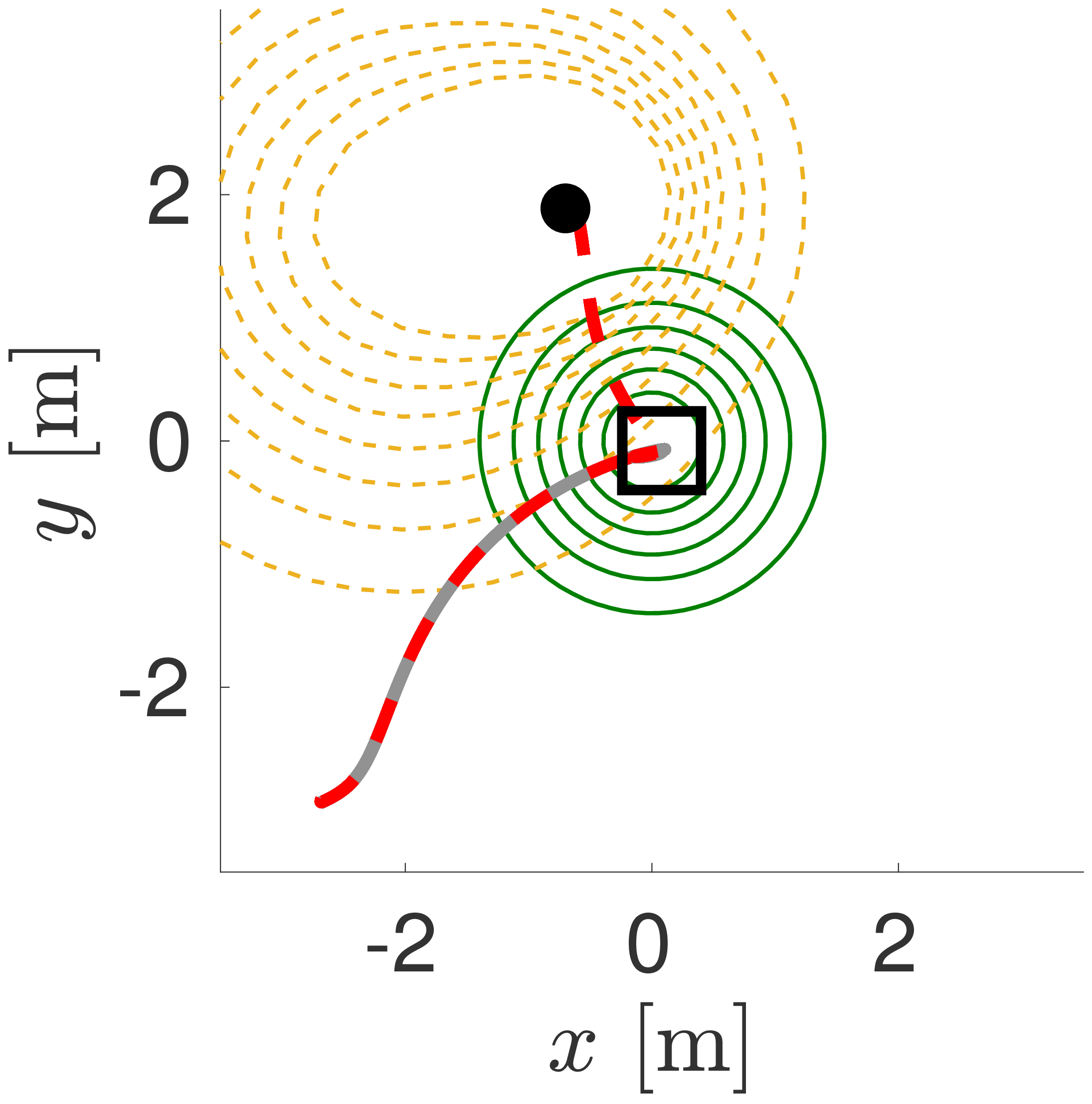}}%
		\subfloat[]{\includegraphics[width=0.245\textwidth]{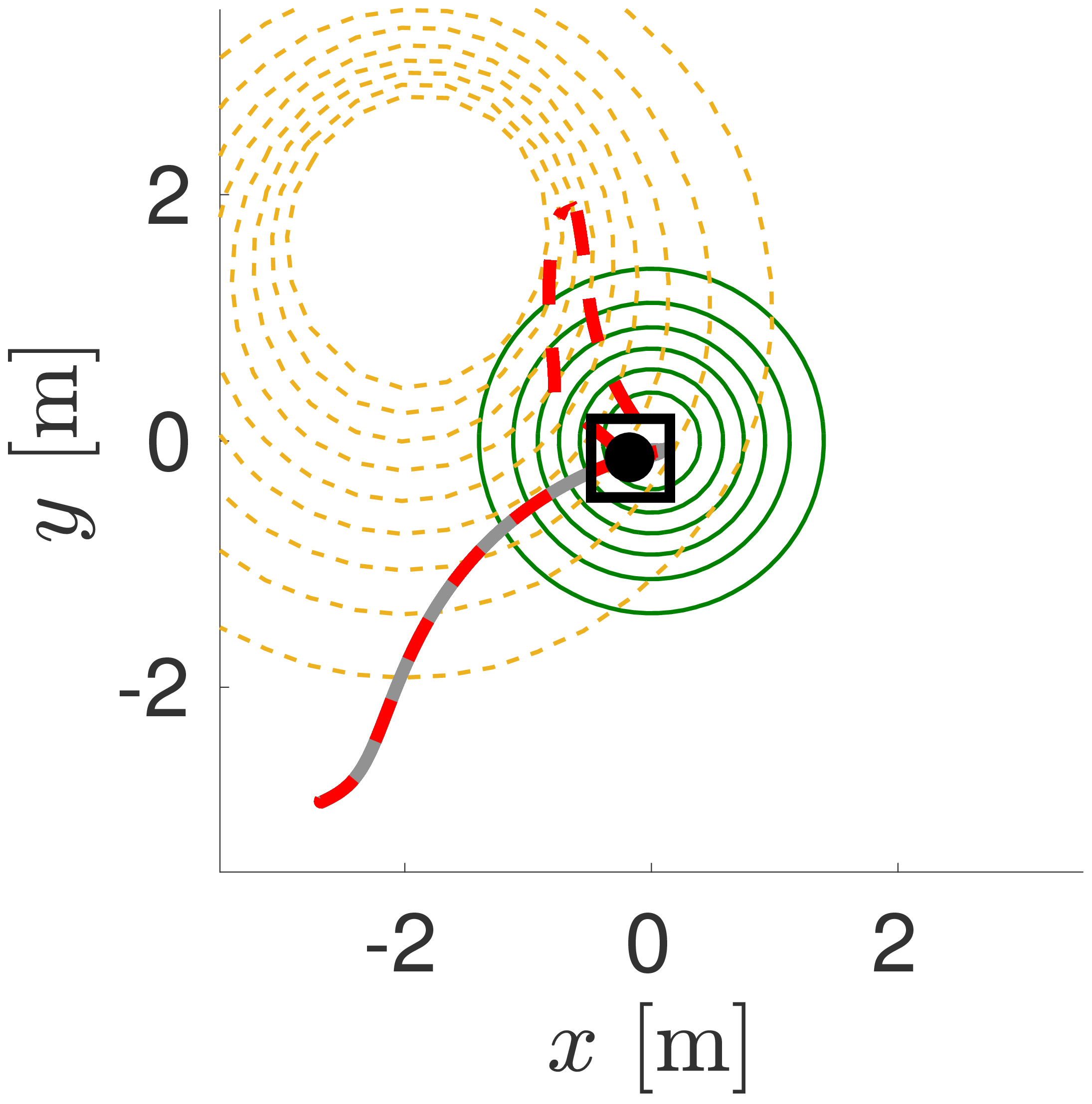}}\\
		\subfloat[]{\label{subfig:bigfig_simulations_5}\includegraphics[width=0.245\textwidth]{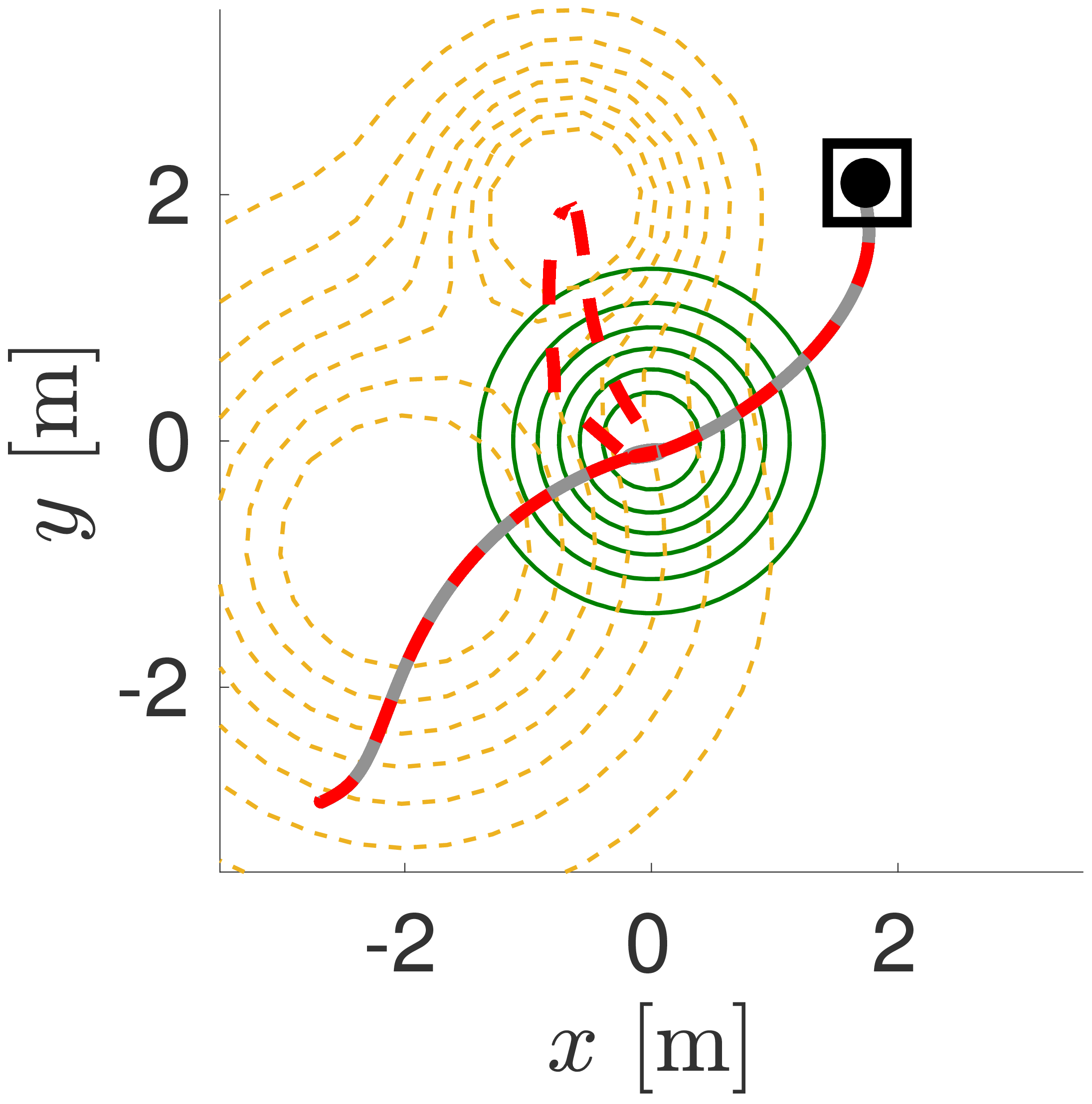}}
		\caption{Sequence of images recorded during the course of the environment exploration simulated experiment. The contour plots of the information distribution function $\phi$ and the environment field $I$ are depicted as green thin solid lines and yellow thin dashed lines, respectively. The position tracked by the robot under the nominal control input is represented as a black square, while the actual position of the robot is shown as a black dot. The nominal and actual trajectories are depicted as a gray thick solid line and a red thick dashed line, respectively. In order to persistently explore the environment, the robot follows the nominal input as long as its energy level is high enough. When its battery is depleting, it moves towards regions of the environment where the value of the time-varying field $I$ is such that its energy starts increasing.}
		\label{fig:bigfig_simulations}
	\end{figure}
	Figures~\ref{subfig:bigfig_simulations_1} to \ref{subfig:bigfig_simulations_5} show a sequence of snapshots taken during the course of the environment exploration experiment described above. The contour plot of the function $\phi$ is depicted as green thin solid lines, while the contour plot of the environment field $I$ is represented by the yellow thin dashed lines. The position that the robot is tracking under the nominal control input is depicted as a black square, whereas its actual position executing the controller \eqref{eq:qp} is represented by a black circle. Furthermore, nominal and executed trajectories are represented by a gray thick solid line and a red thick dashed line, respectively.
	
	\begin{figure*}[th]
		\centering
		\subfloat[]{\label{subfig:pdfphi}
			\includegraphics[width=0.325\textwidth]{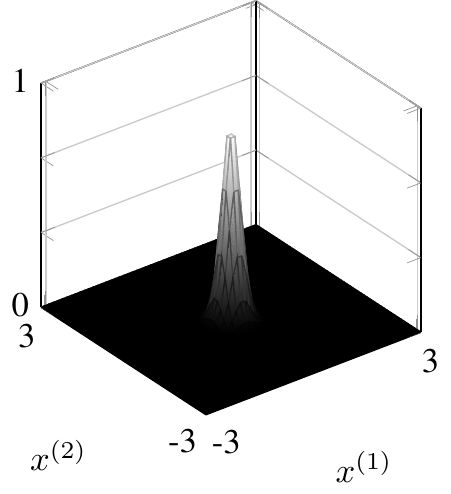}}\hfill 
		\subfloat[]{\label{subfig:pdfenergy}
			\includegraphics[width=0.325\textwidth]{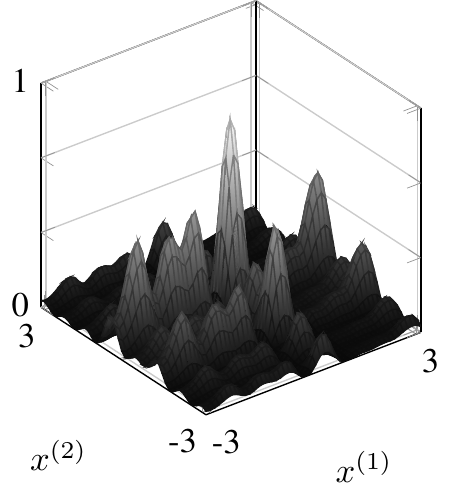}}\hfill 
		\subfloat[]{\label{subfig:pdfnoenergy}
			\includegraphics[width=0.325\textwidth]{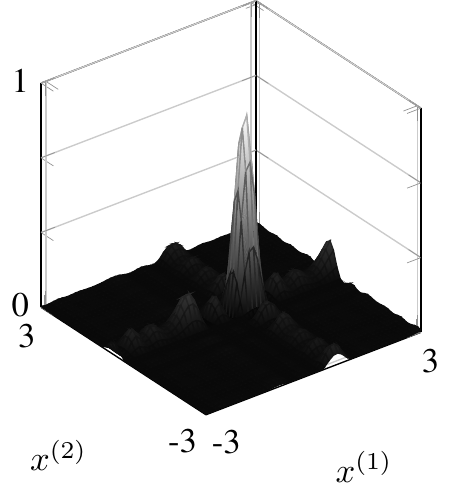}} 
		\caption{Comparison between the spatial probability density function $\phi$, in \protect\subref{subfig:pdfphi}, and the probability density function obtained averaging over time the ergodic trajectory resulting from the implementation of the persistified environment exploration, in \protect\subref{subfig:pdfenergy}; \protect\subref{subfig:pdfnoenergy} depicts the probability density function representing the time-averaged optimized ergodic trajectory obtained without taking into account energy constraints. $x^{(1)}$ and $x^{(2)}$ are the two components of the state vector $x \in \mathcal E \subset \mathbb R^2$.}
		\label{fig:trajpdf}
	\end{figure*}
	Figure~\ref{fig:trajpdf} compares the probability density function representing the spatial information distribution $\phi$ (Fig.~\ref{subfig:pdfphi}) with the probability density functions for the time-averaged optimized trajectory with energy constraints (Fig.~\ref{subfig:pdfenergy}) and without energy constraints (Fig.~\ref{subfig:pdfnoenergy}). Even though the latter more closely matches the spatial distribution $\phi$, it does not take into account that the robot has a finite availability of energy.
	
	\begin{figure}[th]
		\centering
		\includegraphics[width=.49\textwidth]{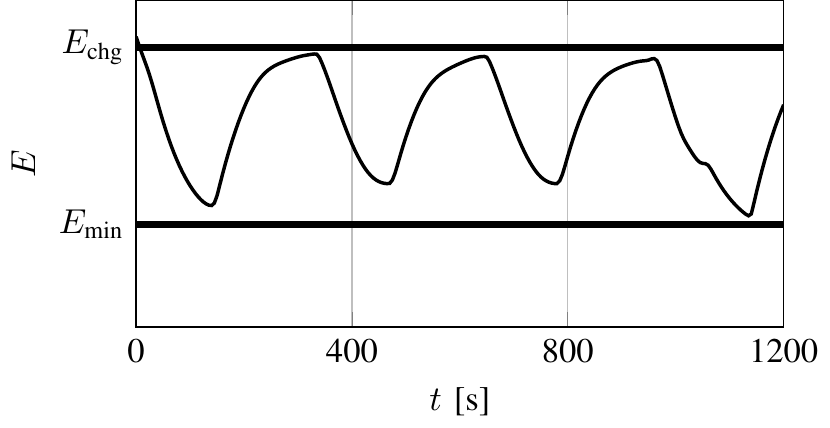}
		\caption{Simulated battery level of the robot during the course of the persistified exploration experiment. Employing the persistification strategy presented in this paper, achieved by executing the control input solution of the QP~\eqref{eq:qp}, the robot energy (thin line) is constrained within the bounds $\Emin$ and $\Echg$.}
		\label{fig:ergodicenergy}
	\end{figure}
	
	In Fig.~\ref{fig:ergodicenergy}, the energy level of the robot during a persistified long-term exploration experiment are reported. The application of the control framework presented in this paper is demonstrated to be successful in persistifying the robotic exploration. This is realized by keeping the robot energy level constrained above a minimum value in an optimal way by means of the QP \eqref{eq:qp}. This way, the robot is completely free of tracking the ergodic trajectory given as input to its motion controller, as long as its battery level is above the lower threshold $\Emin$ and below the upper threshold $\Echg$, depicted as thick solid lines in Fig.~\ref{fig:ergodicenergy}.
	
	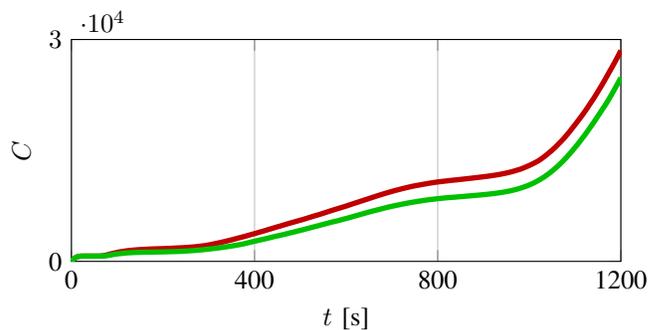
\begin{figure}[th]
		\centering
		\begin{tikzpicture} 
		\begin{axis}
		[
		no marks, 
		xlabel={$t$ [s]}, 
		xtick={0,400,800,1200}, 
		xticklabels={0,400,800,1200}, 
		ylabel={$C$}, 
		ytick={0,30000},
		yticklabels={0,$3$},
		xmin=0, 
		xmax=1200, 
		ymin=0, 
		ymax=30000, 
		enlarge x limits=-1, 
		enlarge y limits=-1, 
		grid=both, 
		width=0.49\textwidth, 
		height=0.25\textwidth 
		]
		\addplot [line width=2pt, color=clr1] table [x=t, y=woV, col sep=space]{data/withWithoutCLF.txt};
		\addplot [line width=2pt, color=clr2] table [x=t, y=wV, col sep=space]{data/withWithoutCLF.txt};
		\end{axis}
		\end{tikzpicture}
		\caption{\changed{Comparison between persistent task execution with and without battery recharging constraints. The green curve depicts the value of $C$, defined in \eqref{eq:Ct}, in which the robot input $u(t) = u^*(t)$, solution of \eqref{eq:qp}. The red curve has been obtained by letting the robot execute the input $u(t)$ solution of \eqref{eq:qp} from which the constraint \eqref{eq:clfconstraint} corresponding to battery recharing has been removed. As a result, the value of $C$ in the latter case is higher than the one in the former, i.\,e. the robot spends more time deviating from the nominal input $\hat u(t)$ in order to visit charging stations in the environment and prevent its energy from depleting.}}
		\label{fig:wwoCLF}
	\end{figure}
	
	\changed{In Section~\ref{sec:applications}, CLF constraints were introduced with the objective of completely recharging the battery of the robots before leaving areas of the environment where $\dot E>0$, referred to as charging stations. This leads to a more efficient task execution in the sense that the robots overall spend less time recharging their batteries. This condition allows them to deviate from the nominal task input of a smaller amount over the course of the experiment. In Fig.~\ref{fig:wwoCLF}, the value of $C$ at time $t$ represents the integral over the time interval $[0,t]$ of the difference between the nominal control input $\hat u$---corresponding to the exploration task---and the input $u$ executed by the robot:
		\begin{equation}
			\label{eq:Ct}
			C(t)=\int_0^t \norm{u(\tau)-\hat u(\tau)}^2 d\tau.
		\end{equation}
		The green curve is generated letting the robot execute the input $u(t)=u^*(t)$, solution of \eqref{eq:qp}. The red curve corresponds to the situation where the robot executes the input solution of \eqref{eq:qp} in which the CLF constraint \eqref{eq:clfconstraint} for battery recharging has been removed. As can be seen, the battery recharging constraint allows the robot to visit charging stations less often, so that its input deviates from the nominal one of a smaller amount over the course of the task execution.}
	
	\subsection{Environment Surveillance}
	\label{subsec:surveillance}
	
	The second application that is considered as showcase is environment surveillance. The employment of mobile sensors improves coverage and data gathering performances compared to static sensors, whose positions are determined based on offline optimization algorithms. In this sense, mobility can allow a more efficient estimation of time-varying information fields. However, this comes at the price of higher power consumption. Most of the approaches developed so far assume that the mobile sensors are able to move for an unlimited amount of time. The control framework presented in this paper can be used to persistify such surveillance tasks.
	
	The task of environment surveillance can be framed as a sensor coverage control problem, that is an instance of the broader optimal sensor placement problem whose applications can be found in many other disciplines, such as \cite{okabe1997locational}. As in Section~\ref{subsec:exploration}, let the map $\phi : \mathcal E \to \R{}_+$ represent a spatial distribution density function. This can be interpreted as a measure of the information spread over the environment $\mathcal E$ or the probability that an event can take place at a location $x \in \mathcal E$. Moreover, let us define the locational optimization function as in \cite{cortes2004coverage}:
	\begin{equation}
		\mathcal H(\mathcal X, \mathcal W) = \sum_{i=1}^{N} \int_{W_i} \sigma(\norm{x-x_i}) \phi(x) dx,
		\label{eq:loccost}
	\end{equation}
	where $\mathcal X = \{ x_1, \ldots, x_N \}$ are the positions of the $N$ robots present in the environment $\mathcal E$, $\mathcal W = \{ W_1, \ldots, W_N \}$ is a partition of $\mathcal E$, $\sigma :  \R{}_+ \to \R{}_+$ is a non-decreasing differentiable function describing the degradation in the sensing performances of the robots. Proceeding as in \cite{cortes2004coverage}, we aim at minimizing $\mathcal H(\mathcal X, \mathcal W)$ with respect to both $\mathcal X$ and $\mathcal W$. The minimization with respect to the environment partition $\mathcal W$ leads to $\mathcal W = \mathcal V = \{ V_1, \ldots, V_N \}$ \cite{okabe1997locational}, $\mathcal V$ being the Voronoi partition of $\mathcal E$ defined by the Voronoi cells
	\begin{equation}
		V_i = \{ x \in \mathcal E ~\left\vert~ \norm{x-x_i} \le \norm{x-x_j}~\forall i \neq j \right. \}.
	\end{equation}
	As far as the minimization with respect to the robot locations $\mathcal X$ is concerned, considering the single integrator dynamics of the robots, and setting $\sigma(\norm{x-x_i})= \norm{x-x_i}^2$ allows us to define the following gradient descent update step to be used as a control law for moving the robots \cite{mesbahi2010graph}:
	\begin{equation}
		\hat u_i = k_p (C_{V_i}-x_i).
		\label{eq:coverage}
	\end{equation}
	In \eqref{eq:coverage}, $k_p > 0$ is a proportional gain and $C_{V_i}$ is the centroid of the $i$-th Voronoi cell defined as:
	\begin{equation}
		C_{V_i} = \dfrac{\int_{V_i} x \phi(x) dx}{\int_{V_i} \phi(x) dx}.
	\end{equation}
	In case the map $\phi$ is time-varying, the extension presented in \cite{lee2015multirobot} and \cite{santos2019dencetralized} can be employed. The coverage task with $\hat u = [\hat u_1\tr, \ldots, \hat u_N\tr]\tr$ and $\hat u_i$ given by \eqref{eq:coverage}, can be persistified by the implementation of the optimization-based controller solution of the QP in \eqref{eq:qp}.
	
	The persistified environment surveillance has been implemented and deployed on the Robotarium, a remotely accessible swarm robotics research testbed \cite{pickem2017robotarium}. A team of 7 differential-drive robots attempts to cover the $4.5\times3.5$\,m testbed area by making use of the coverage control introduced above. The Robotarium is endowed with wireless charging stations, allowing the modeling of the charging field $I$ given in \eqref{eq:solarintensity} by means of bump-like functions as depicted in Fig.~\ref{fig:lumpedenergysources}. The robot model is a single integrator model, where we can directly control the robots' velocities utilizing the proportional control law given in \eqref{eq:coverage}. This input is wrapped by the QP \eqref{eq:qp} in order to satisfy the energy constraints. This results in a persistified environment surveillance.
	
	\begin{figure}
		\centering
		\subfloat[]{\label{subfig:bigfig_experiments_1}\includegraphics[trim={12cm 0cm 0cm 0cm},clip,width=0.35\textwidth]{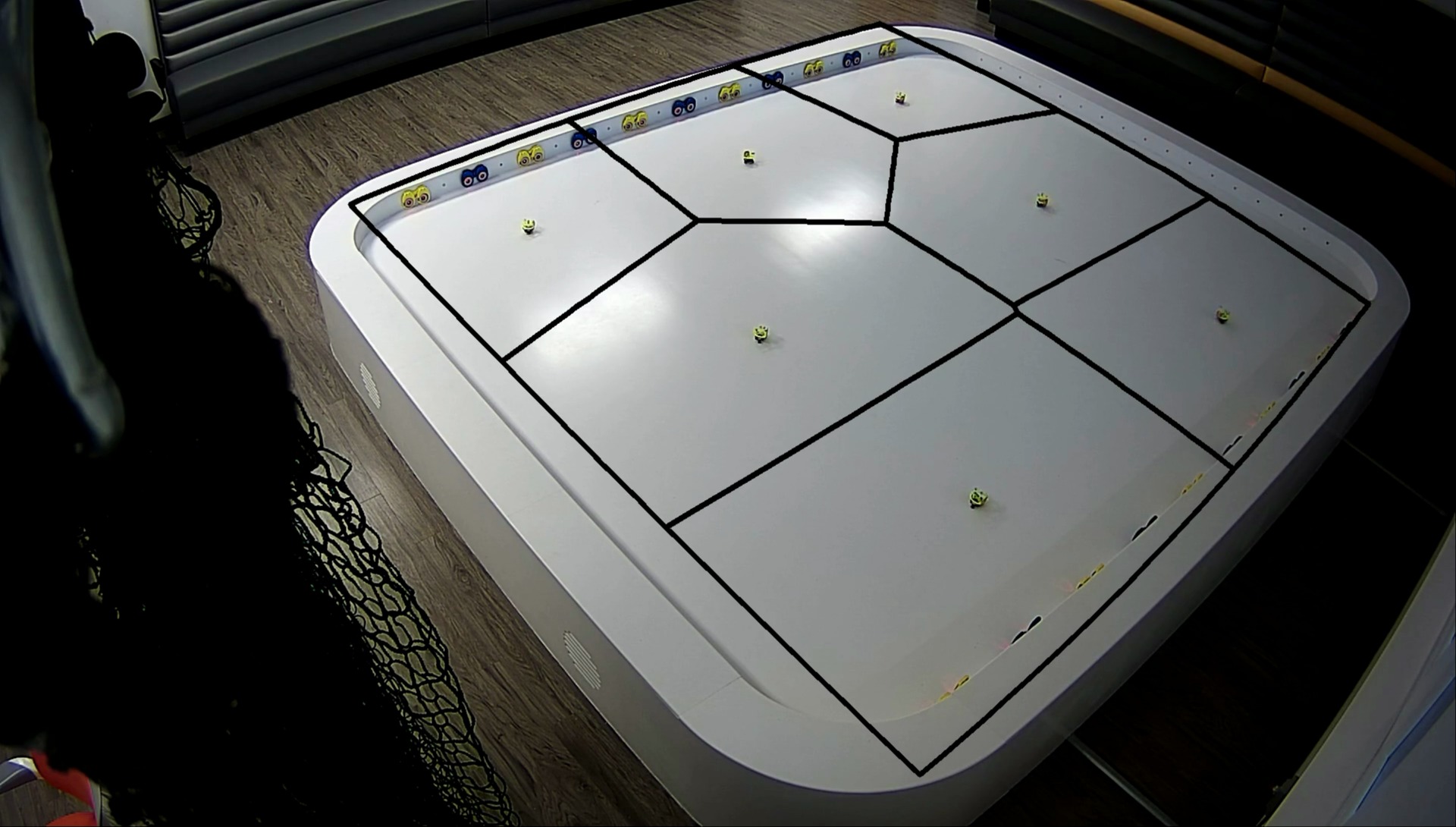}}\\
		\subfloat[]{\label{subfig:bigfig_experiments_2}\includegraphics[trim={12cm 0cm 0cm 0cm},clip,width=0.35\textwidth]{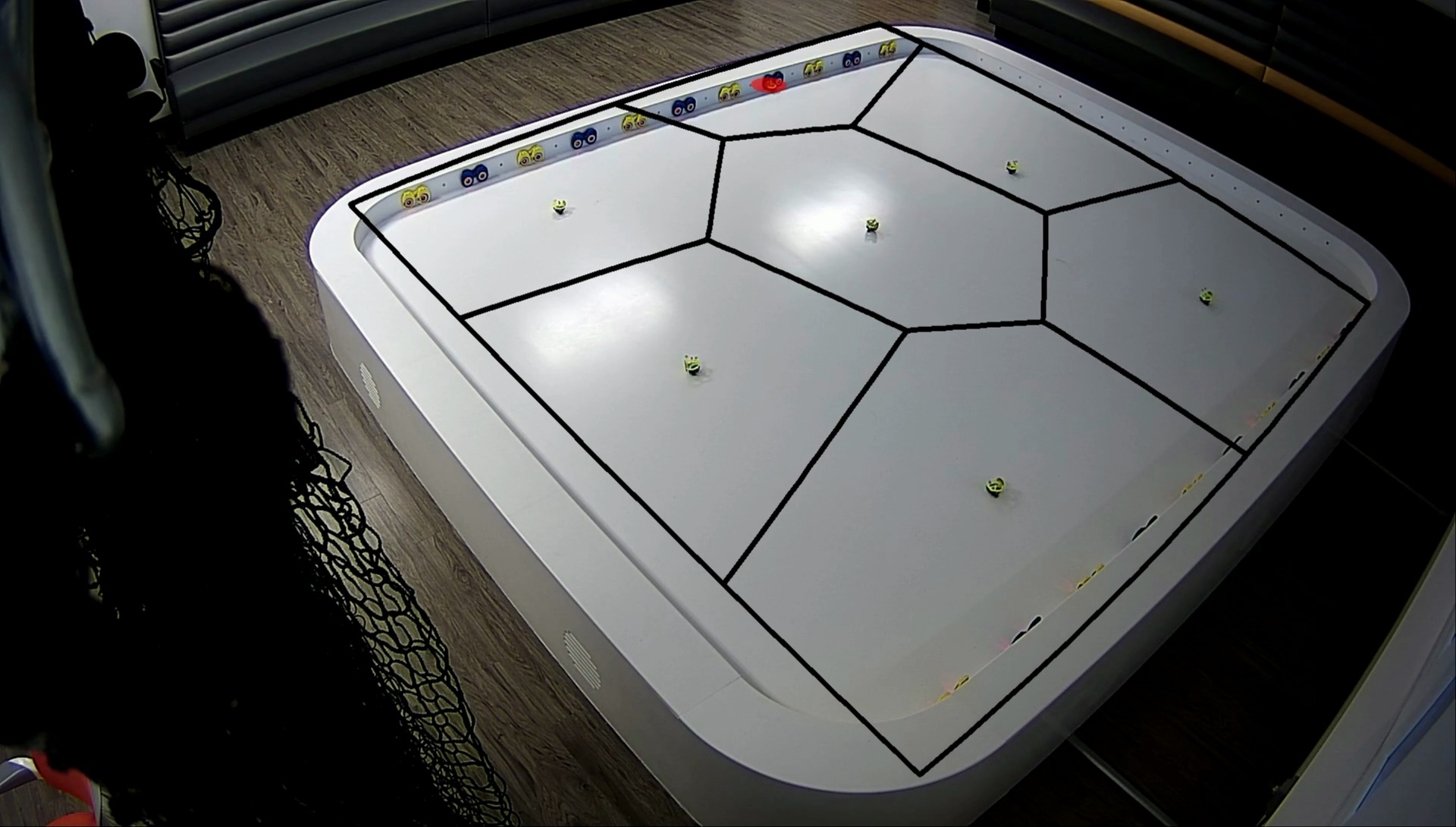}}\\
		\subfloat[]{\label{subfig:bigfig_experiments_3}\includegraphics[trim={12cm 0cm 0cm 0cm},clip,width=0.35\textwidth]{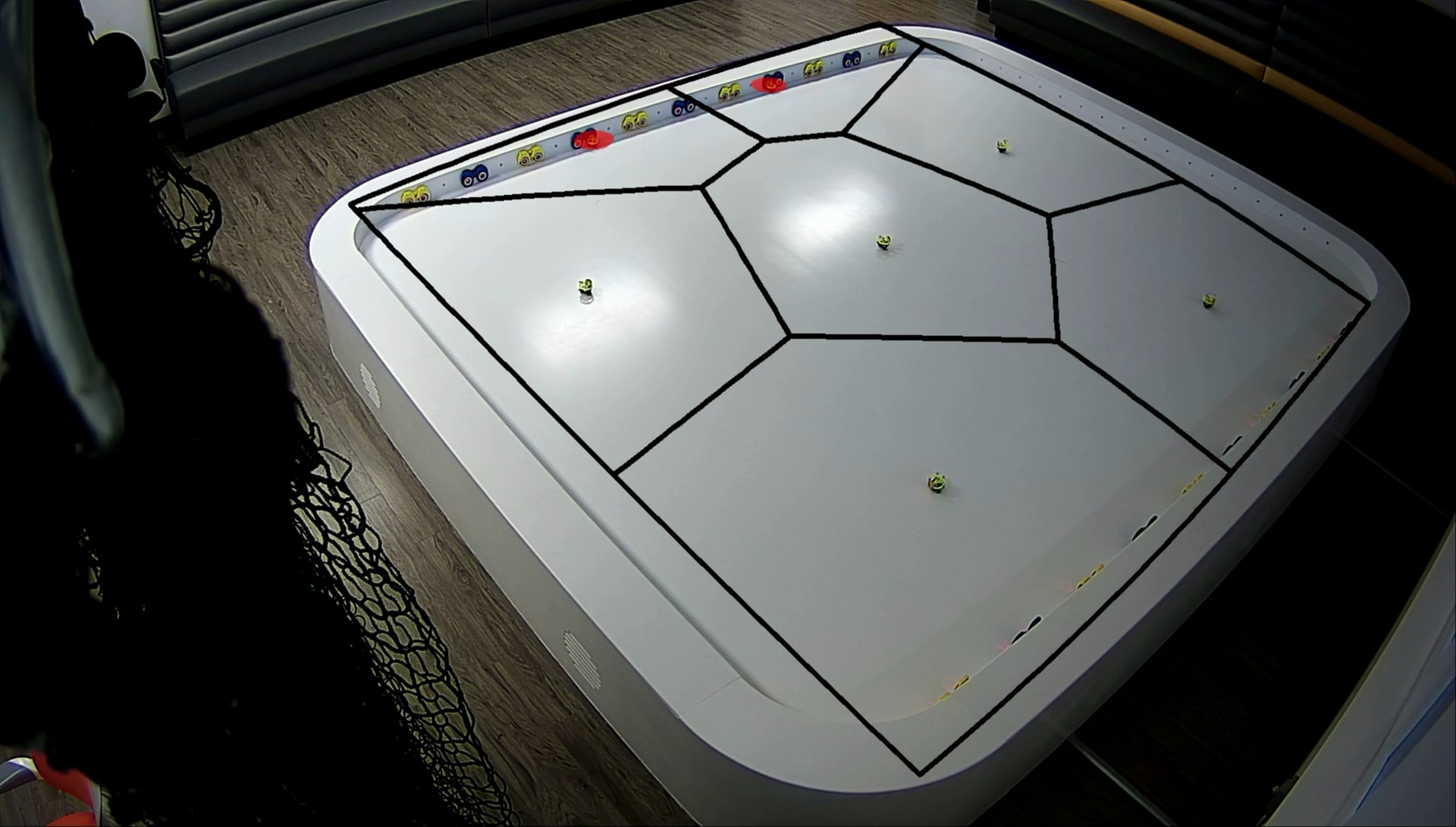}}\\
		\subfloat[]{\label{subfig:bigfig_experiments_4}\includegraphics[trim={12cm 0cm 0cm 0cm},clip,width=0.35\textwidth]{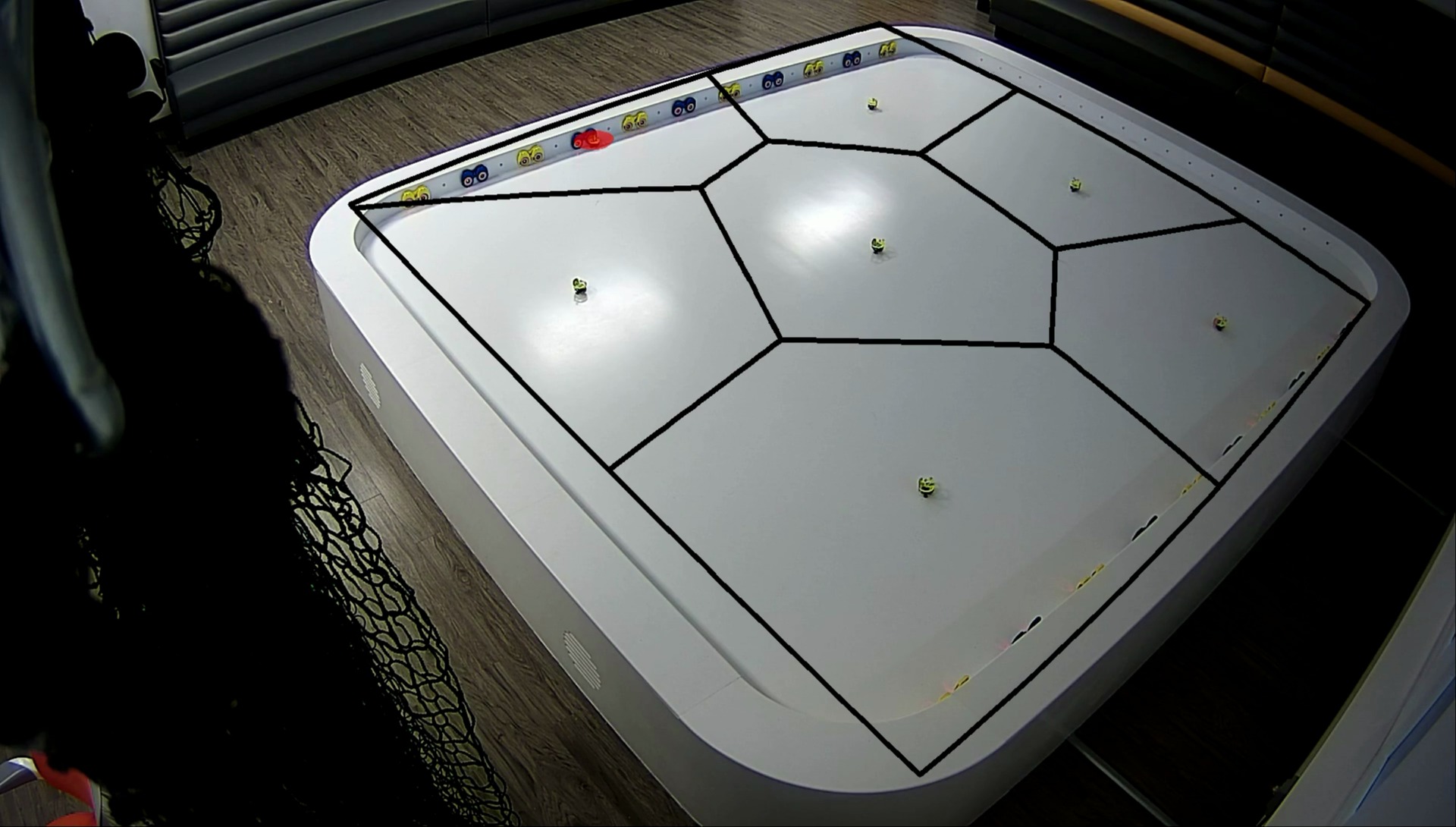}}
		\caption{Sequence of salient frames extracted from the video of the persistent environment surveillance experiment. A team of 7 small differential-drive robots are deployed to perform persistent sensor coverage of the testbed of the Robotarium \cite{pickem2017robotarium}. On one edge of the testbed yellow and blue wireless charging stations are arranged, where the robots can recharge their batteries. The environment (charging) field has been modeled similarly to the one shown in Fig.~\ref{fig:lumpedenergysources}. The black lines represent the boundaries of the Voronoi cells corresponding to each robot. The sequence of images shows the robots performing coverage under the nominal control input (\protect\ref{subfig:bigfig_experiments_1}), two robots, marked with red circles, going back to the charging stations to recharge their batteries (\protect\ref{subfig:bigfig_experiments_2} to \protect\ref{subfig:bigfig_experiments_4}) driven by the controller \eqref{eq:qp}.}
		\label{fig:bigfig_experiments}
	\end{figure}
	Figures \ref{subfig:bigfig_experiments_1} to \ref{subfig:bigfig_experiments_4} show the salient frames of the persistent environment surveillance experiment performed on the Robotarium. The Voronoi cells are superimposed on the frames. The yellow and blue wireless charging station are arranged along one of the edges of the testbed. Following the nominal controller \eqref{eq:coverage}, the robots perform sensor coverage (\ref{subfig:bigfig_experiments_1}). The actual controller executed by the robots is the solution of \eqref{eq:qp}, which allows them to go back and recharge their batteries to prevent the stored energy from going below the minimum desired value $\Emin$ as shown in \ref{subfig:bigfig_experiments_2},~\ref{subfig:bigfig_experiments_3}~and~\ref{subfig:bigfig_experiments_4} (the charging stations occupied by robots are marked in red).
	
	\begin{figure}[th]
		\centering
		\begin{tikzpicture} 
		\begin{axis}
		[
		no marks, 
		xlabel={$t$ [s]}, 
		xtick={0,600,1200,1700}, 
		xticklabels={0,600,1200,1700}, 
		ylabel={$\mathcal H$}, 
		ytick={0,0.1},
		yticklabels={0,0.1},
		xmin=0, 
		xmax=1700, 
		ymin=0, 
		ymax=0.1, 
		enlarge x limits=-1, 
		enlarge y limits=-1, 
		grid=both, 
		width=0.49\textwidth, 
		height=0.25\textwidth 
		]
		\addplot [line width=1pt, color=black] table [x=t, y=C, col sep=space]{data/locational_cost.txt};
		\addplot [line width=2pt, color=black] table [x=t, y=C0, col sep=space]{data/locational_cost.txt};
		\end{axis}
		\end{tikzpicture}
		\caption{Locational cost \eqref{eq:loccost} evaluated during the course of a coverage control experiment: the thin line is the cost obtained imposing the energy constraints to the robots, whereas the thick line is the cost obtained assuming infinite availability of energy.}
		\label{fig:c}
	\end{figure}
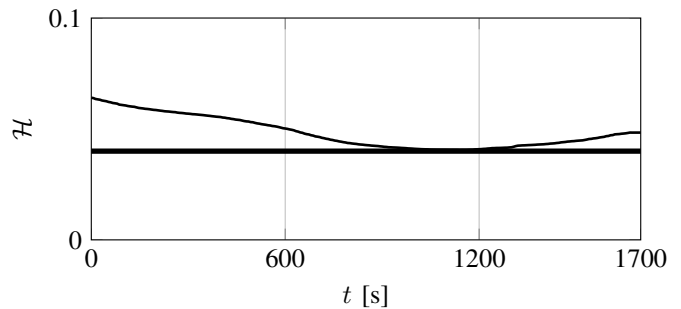
	In Fig.~\ref{fig:c}, the thin line shows the value of the locational cost defined in \eqref{eq:loccost} evaluated during the course of the experiment. Here the information distribution density function $\phi$ has been set to a constant value, meaning that the information is equally spread over the testbed. The thick line of Fig.~\ref{fig:c} represents the value of the cost \eqref{eq:loccost} obtained if no energy constraints were imposed, in which case, the robots are able to asymptotically reach a centroidal Voronoi configuration, where each robot is in the centroid of its corresponding Voronoi cell. This configuration leads to a local minimum of \eqref{eq:loccost}. The value of $\mathcal H$ decreases in correspondence of the situations in which all the available robots are not constrained to charge and are consequently free of following the coverage control input.
	
	\begin{figure}[th]
		\centering
		\begin{tikzpicture} 
		\begin{axis}
		[
		no marks, 
		xlabel={$t$ [s]}, 
		xtick={0,600,1200,1800}, 
		xticklabels={0,600,1200,1800}, 
		ylabel={$E$}, 
		ytick={0,3,3.485000, 4.059000}, 
		yticklabels={0, ,$\Emin$, $\Echg$}, 
		xmin=0, 
		xmax=1700, 
		ymin=3.2, 
		ymax=4.2, 
		enlarge x limits=-1, 
		enlarge y limits=-1, 
		grid=both, 
		width=0.49\textwidth, 
		height=0.25\textwidth 
		]
		\addplot [line width=1pt, color=clr1] table [x=t, y=b1, col sep=comma]{data/fig1_1new.txt};
		\addplot [line width=1pt, color=clr2] table [x=t, y=b2, col sep=comma]{data/fig1_1new.txt};
		\addplot [line width=1pt, color=clr3] table [x=t, y=b3, col sep=comma]{data/fig1_1new.txt};
		\addplot [line width=1pt, color=clr4] table [x=t, y=b4, col sep=comma]{data/fig1_1new.txt};
		\addplot [line width=1pt, color=clr5] table [x=t, y=b5, col sep=comma]{data/fig1_1new.txt};
		\addplot [line width=1pt, color=clr6] table [x=t, y=b6, col sep=comma]{data/fig1_1new.txt};
		\addplot [line width=1pt, color=clr7] table [x=t, y=b7, col sep=comma]{data/fig1_1new.txt};
		\addplot [line width=2pt, color=black] table [x=t, y=echg, col sep=comma]{data/fig1_1new.txt};
		\addplot [line width=2pt, color=black] table [x=t, y=emin, col sep=comma]{data/fig1_1new.txt};
		\end{axis}
		\end{tikzpicture}
		\caption{Measured battery levels of the 7 robots executing the persistified coverage control experiment: as a result of implementing the input solution of \eqref{eq:qp}, the energy levels of all the robots are constrained to be between the values $\Emin$ and $\Echg$.}
		\label{fig:battxprmnt}
	\end{figure}
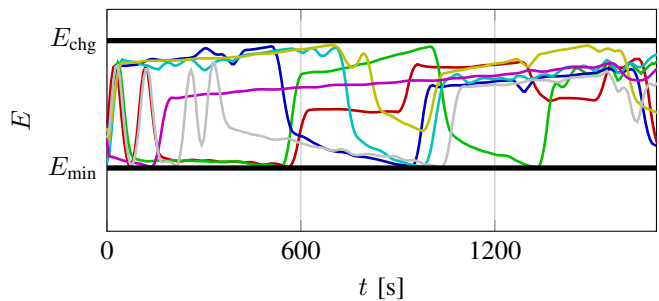
	Figure~\ref{fig:battxprmnt} displays the actual energy stored in the batteries of the robots as measured during the course of the experiment. The inputs to the robots are constrained by \eqref{eq:cbfconstraint} and \eqref{eq:clfconstraint} in such a way that their energies never exceed $\Echg$, while never going below $\Emin$ either. As a results of the optimization-based formulation, each individual robot is able to follow as closely as possible the input encoding the coverage task as long as its battery level satisfies the imposed constraints.
	
	\section{Conclusions}
	\label{sec:conclusions}
	
	In this paper we introduced the concept of robotic task persistification, i.\,e., the process of rendering a robotic task persistent. This allows robots to execute a task over long time horizons, by ensuring that the energy stored in their batteries never gets depleted. The control of the robot energy level is wrapped around the task controller by utilizing control barrier functions and control Lyapunov functions combined in a single optimization-based controller which can be efficiently executed in an online fashion. The result of this formulation is a control framework that allows the robots to execute as closely as possible the assigned task, while simultaneously never depleting the energy in their batteries. This constraint is enforced by expressing the task persistence condition in terms of the forward invariance of a subset of the state space of the robots. The forward invariance property is then ensured by employing control barrier functions, whereas battery recharging is achieved by defining a suitable control Lyapunov function.
	
	The results in Theorem~\ref{thm:nestedcbfs}, Lemma~\ref{thm:tvzcbf}~and~Theorem~\ref{thm:nestedclfs} allow us to apply the presented framework to many different kinds of robots endowed with rechargeable, or even interchangeable, sources of energy. In order to be able to efficiently enforce energy constraints, we insist on the robot dynamic model being in control affine form, case that is often encountered in robotic applications. Since the persistified task is obtained as the output of an optimization problem, the robots are free to execute the given task as closely as possible as long as their source of energy is not discharging below a given lower threshold. The persistification strategy has been applied to environment exploration and monitoring tasks, and it has been tested both in simulation and on an team of ground mobile robots.
	
	\appendices
	\section{}
	\label{app:formulas}
	
	For single integrator robot dynamics and the CBFs \eqref{eq:energycbf} and \eqref{eq:energycbf2}, the expressions of $\frac{\partial h_{i2}}{\partial t}$, $L_f^2 h_{i1}$ and $L_g L_f h_{i1}$ required to evaluate \eqref{eq:cbfconstraint} are given by:
	\begin{equation}
		\begin{split}
			&\begin{aligned}
				\frac{\partial h_{i2}}{\partial t} &= (\Echg+\Emin-2E_i)\frac{\partial F}{\partial w}\frac{\partial w}{\partial I}\frac{\partial I}{\partial t} =\\
				&\!\!\!\!\!\!= (\Echg+\Emin-2E_i) k w^2 \frac{1-E_i}{E_i} \lambda e^{-\lambda(I(x_i,t)-I_\text{c})}\frac{\partial I}{\partial t}
			\end{aligned}\\
			&\begin{aligned}
				L_f^2 h_{i1} &= \bigg( -2F(x_i,E_i,t)+\\
				&+(\Echg+\Emin-2E_i)\frac{\partial F}{\partial E_i} \bigg)F(x_i,E_i,t)
			\end{aligned}\\
			&\begin{aligned}
				L_g &L_f h_{i1} = (\Echg+\Emin-2E_i)\frac{\partial F}{\partial w}\frac{\partial w}{\partial I}\frac{\partial I}{\partial x_i}=\\
				&= (\Echg+\Emin-2E_i) k w^2 \frac{1-E_i}{E_i} \lambda e^{-\lambda(I(x_i,t)-I_\text{c})}\frac{\partial I}{\partial x_i}
			\end{aligned}
		\end{split}
	\end{equation}
	
	For single integrator robot dynamics and the CLF \eqref{eq:chargingclf}, the expressions of $\frac{\partial h_{i2}}{\partial t}$, $L_f^2 V_i$ and $L_g L_f V_i$ required to evaluate \eqref{eq:clfconstraint} are given by:
	
	\begin{equation}
		\begin{split}
			&\begin{aligned}
				\frac{\partial h_{i2}}{\partial t} &= -2(\Echg-E_i)\frac{\partial F}{\partial w}\frac{\partial w}{\partial I}\frac{\partial I}{\partial t}=\\
				&= -2(\Echg-E_i)k w^2 \frac{1-E_i}{E_i} \lambda e^{-\lambda(I(x_i,t)-I_\text{c})}\frac{\partial I}{\partial t},
			\end{aligned}\\
			&L_f^2 V_i = \bigg( 2F(x_i,E_i,t)-2(\Echg-E_i)\frac{\partial F}{\partial E} \bigg)F(x_i,E_i,t),\\
			&\begin{aligned}
				L_g L_f &V_i = -2(\Echg-\Emin)\frac{\partial F}{\partial w}\frac{\partial w}{\partial I}\frac{\partial I}{\partial x_i}=\\
				&= -2(\Echg-\Emin) k w^2 \frac{1-E_i}{E_i} \lambda e^{-\lambda(I(x_i,t)-I_\text{c})}\frac{\partial I}{\partial x_i}.
			\end{aligned}
		\end{split}
	\end{equation}
	
	
	\ifCLASSOPTIONcaptionsoff
	\newpage
	\fi
	
	\bibliographystyle{IEEEtran}
	\bibliography{bib/IEEEabrv,bib/tcst_references}

	\typeout{get arXiv to do 4 passes: Label(s) may have changed. Rerun}
	
\end{document}